\documentclass{article}
\usepackage{iclr2027_conference,times}
\usepackage[utf8]{inputenc}
\usepackage[T1]{fontenc}
\usepackage{amsmath,amsfonts,bm}

\def\eqref#1{equation~\ref{#1}}
\def\1{\bm{1}}

\DeclareMathAlphabet{\mathsfit}{\encodingdefault}{\sfdefault}{m}{sl}
\SetMathAlphabet{\mathsfit}{bold}{\encodingdefault}{\sfdefault}{bx}{n}

 \usepackage{hyperref}
\hypersetup{colorlinks=true,allcolors=black,hypertexnames=false}
\usepackage{url}
\usepackage{graphicx}
\usepackage{booktabs}
\usepackage{multirow}
\usepackage{array}
\usepackage{amsmath}
\usepackage{amssymb}
\usepackage{amsthm}
\usepackage{xcolor}
\usepackage{microtype}
\usepackage{placeins}
\usepackage{needspace}
\usepackage{float}
\usepackage{fancyhdr}
\usepackage[ruled,vlined,linesnumbered]{algorithm2e}
\usepackage{wrapfig}
\usepackage{needspace}
\setcitestyle{authoryear,round,citesep={,},aysep={,},yysep={,}}


\newcommand{\method}{ICE}
\newcommand{\icename}{\textbf{I}nteraction-aware \textbf{C}lifford \textbf{E}ncoder}
\newcommand{\Cl}{\mathrm{Cl}}
\newcommand{\grade}[2]{\left\langle #1\right\rangle_{#2}}
\newcommand{\rev}[1]{\widetilde{#1}}

\newtheoremstyle{iceanalysis}{3pt}{3pt}{\normalfont}{}{\bfseries}{.}{0.5em}{}
\theoremstyle{iceanalysis}
\newtheorem{theorem}{Theorem}[section]
\newtheorem{definition}[theorem]{Definition}
\newtheorem{lemma}[theorem]{Lemma}
\title{ICE: Task-Aligned Clifford Latent Fields for Multimodal Graph Foundation Models}
\author{
 \begin{tabular}{c}
 Xunkai Li$^{1,*}$ \quad Xu Wang$^{2,*}$ \quad Yinlin Zhu$^{3}$ \quad Xiong Yongfu$^{1}$\\
 Yi Liu$^{1}$ \quad Rong-Hua Li$^{1}$ \quad Guoren Wang$^{1}$\\[6pt]
 $^{1}$Department of Computer Science, Beijing Institute of Technology, Beijing, China\\
 $^{2}$School of Airspace Science and Engineering, Shandong University, WeiHai, China\\
 $^{3}$School of Computer Science and Engineering, Sun Yat-sen University, GuangZhou, China
 \end{tabular}
}

\begin{document}
\maketitle
\footnotetext{*Equal contribution. Corresponding author: Rong-Hua Li \texttt{<lironghuabit@126.com>}.}

\begin{abstract}
Multimodal attributed graphs connect entities, visual content, language, and observed relations.
Learning one foundation across such graphs requires more than compressing each node into a fused
Euclidean vector. The representation must preserve entity semantics, construct interaction state
from graph neighborhoods, and expose that state to prediction units with different geometry. Our
empirical study shows why these requirements are inseparable. Higher-grade channels recover
pair relations across the foundation graphs, specialized queries reveal information hidden by
a generic readout, and rigid blade isolation removes cross-grade capacity. We therefore introduce
\method{} (\icename{}), a multimodal graph foundation model built on a
node-indexed Clifford latent field. Topology, text, and images enter explicit $\Cl(3)$ addresses.
Edge-aware geometric products transform these directions into scalar, bivector, and trivector
relations over observed neighborhoods. A protected Grade-1 route preserves entity semantics, while
the full grade and depth bank remains available to fresh node and link heads. We establish exact
cross-grade reachability, node-permutation equivariance, and a bound on the task residual around the
semantic score. Experiments span one shared foundation over eleven graphs, six node-classification
datasets, three link-prediction datasets, and matched few-shot tasks. ICE ranks first in all 30
reported supervised and few-shot comparisons. Core removals reduce every task summary, and
mechanism controls connect the gains to higher-order transport, retained multidepth structure,
semantic protection, and direct field access.
\end{abstract}

\section{Introduction}

Multimodal attributed graphs (MAGs) organize entities, textual and visual attributes, and observed
relations in one computable structure. A product can be ambiguous from its image and description
alone yet become identifiable through related products. A media item can be understood through its
synopsis and poster together with links induced by topic, audience, or creator. Graph learning must
therefore go beyond within-node modality matching. It must decide which cross-entity signals to
transport and which interactions matter to the current prediction
\citep{DBLP:journals/natmi/EktefaieDNFZ23,DBLP:journals/corr/abs-2402-05322,MAGB,mmgraph}.

Multimodal graph foundation models (MGFMs) raise this requirement from one dataset to many. Given
topology and text/image node attributes, one dataset-agnostic backbone should support both node
classification (NC) and link prediction (LP) across graphs. NC emphasizes entity-level class
signals, whereas LP scores whether two node states form a relation and is especially sensitive
to ordered interactions. A useful foundation must preserve entity-local semantics while exposing
relation-aware context through a reusable task interface.

Existing pipelines commonly encode each modality, fuse the outputs, and propagate one Euclidean
node vector. This design can learn strong representations, but it places topology, semantic
directions, and cross-modal interactions in an undifferentiated coordinate space. It does not
provide an algebraic operation that explicitly constructs higher-order relations from modality
directions along graph edges. Scalar attention can control a neighbor's contribution but does not retain a separate address for
the structured interaction being transported. A generic
downstream readout can then ignore relation state that the backbone already contains.

Task-specific Clifford graph models provide a useful starting point. LION represents multimodal
interactions with multivectors, so symmetric agreement and oriented relations remain accessible in
different algebraic grades \citep{LION}. A graph foundation introduces a broader requirement. One
encoder must construct those relations across heterogeneous graphs while preserving the entity
state required by node prediction and the pair state required by link prediction. The open problem
is therefore not whether Clifford products can model interaction, but how their structured state can
be learned once and reused by prediction units with different geometry.

Figure~\ref{fig:empirical} isolates three diagnostic views that a complete model
comparison cannot reveal. Panel (a) follows relation information as it expands across algebraic
grades and transport depth. Panel (b) shows that direct grade and depth access recovers node
information overlooked by a generic readout. Panel (c) shows that permanent blade isolation
removes useful cross-grade capacity. Together, these observations show that Clifford dimension
alone is not the answer. A successful foundation must coordinate relation construction,
representation freedom, and task access.

These findings motivate a continuous design across representation, transport, and task access. We propose
\method{} (\icename{}) as a dataset-agnostic multimodal graph foundation. ICE makes latent space
concrete as a node-indexed $\Cl(3)$ field whose grades describe scalar, directional, planar, and
volumetric coordinates. Task alignment is realized through fresh supervised and few-shot heads that
query this field by grade and depth.

ICE follows one continuous information path. It assigns topology and node modalities to explicit
input addresses, then composes their directions along observed graph neighborhoods. Product
magnitude controls neighbor allocation, while product direction transforms the transported value
and generates higher-grade relation state. An additive Grade-1 route keeps entity semantics
available. The complete H0, H1, and H2 field is exposed to fresh NC and LP heads, while semantic
topology contrast and masked geometric reconstruction train the same dataset-agnostic encoder.
This design preserves entity semantics without reducing graph interaction to a scalar weight or a
final pooled vector.

We evaluate ICE on supervised and few-shot node and link tasks spanning nine target graphs.
The results consistently favor ICE across saturated and difficult datasets. Component removals and
matched controls further connect the improvement to geometric transport, retained grade and depth
states, semantic protection, and task-specific access.

\noindent\textbf{Our Contributions.}
(1) \textit{\underline{Valuable Insights.}} We show that the limitation is an information-interface
problem: observed edges create relation state in multiple grades and depths, while a final Euclidean
readout can hide that state from a node or pair predictor. Controlled probes establish when those
states improve relation prediction and task-specific access.
(2) \textit{\underline{Novel Method.}} ICE is a dataset-agnostic Clifford field that lifts topology,
text, and image signals into explicit addresses, uses two edge-aware cross-grade transports,
retains the grade and depth bank, and protects a Grade-1 semantic route. Fresh NC and LP heads query the
same field at the prediction unit they must solve, and the formal results state the reachability,
equivariance, access, and residual guarantees under their assumptions.
(3) \textit{\underline{SOTA Performance.}} Under a shared cross-graph protocol, ICE ranks
first across six NC and three LP datasets and the matched few-shot tasks. The three task-organized
matrices report supervised and low-label results, while component ablations and mechanism controls
connect the gains to interaction construction, field retention, semantic protection, and task access.

\section{Preliminaries and Related Work}

\subsection{Problem Setup and Notation}

Consider a multimodal graph $\mathcal{G}_d=(\mathcal{V}_d,\mathcal{E}_d,\mathcal{R}_d)$ with
$N_d=|\mathcal{V}_d|$ nodes, observed edges $\mathcal{E}_d$, and raw text and image attributes
$\mathcal{R}_d$. Frozen modality encoders $\phi_m$ produce
$\mathbf{X}^{(m)}_d=\phi_m(\mathcal{R}^{(m)}_d)\in\mathbb{R}^{N_d\times d_m}$ for
$m\in\mathcal{M}=\{t,v\}$. The foundation encoder receives only
$(\mathcal{V}_d,\mathcal{E}_d,\{\mathbf{X}^{(m)}_d\}_{m\in\mathcal{M}})$; dataset identity,
target labels, validation decisions, and output dimensions are unavailable to its shared maps.

Foundation learning fits one encoder $F_\theta$ over the training graphs with self-supervised
objectives. At transfer time, $F_\theta$ is reused and a fresh task-local head is fitted. For node
classification, the prediction unit is $i\in\mathcal{V}_d$ and the output is
$\hat y_i\in\mathcal{Y}_d$, evaluated by Accuracy and Macro-F1. For link prediction, the prediction
unit is a candidate pair $(i,j)$ and the output is a score $s_{ij}$, evaluated by MRR. The objective
is therefore a shared state that supports both units, not one common label space or one head shared
across tasks.

This setting fixes the graph and frozen modality features presented to the encoder while allowing
the downstream output space and prediction unit to change. It differs from task-specific
multimodal graph learning, which may optimize one encoder for one graph and one task, and from
text-attributed graph foundations, which do not have to keep image--text interaction state
available. These boundaries specify what must be shared and what may remain task local.

Let $\mathcal{D}_{\mathrm{pre}}$ denote the graphs used for foundation learning and
$\mathcal{T}_d$ the downstream task on graph $d$. The shared parameters $\theta^\star$ are fitted
without target-task labels. A task-local parameter set $\omega_{d,\mathcal{T}_d}$ is then trained
from the designated training split and selected with validation data. Evaluation reads the
selected model once on held-out examples. The same $\theta^\star$ is used for all target graphs
and both task families, whereas $\omega_{d,\mathcal{T}_d}$ may change with the output space.
Thus the protocol tests reuse of one foundation, not reuse of a classifier trained elsewhere.

\subsection{Multimodal Graph Learning}

Multimodal graph learning combines entity attributes with relational structure. Traditional models
propagate visual and textual features on one target graph or learn modality-specific interaction for
one prediction setting \citep{MMGCN,MGAT,MAGB,mmgraph}. These models provide strong specialized
operators, but their representation boundary is chosen together with the target task. A node-level
readout can therefore discard pair information, while a relation-oriented readout can weaken the
entity semantics needed by node classification.

Geometric multimodal models offer a different interaction primitive. LION uses Clifford products
to separate scalar agreement from higher-grade oriented interaction \citep{LION}. This algebraic
structure is useful for representing multimodal relations, but a task-specific model does not by
itself define which intermediate grades and depths a cross-graph foundation must expose to a new
prediction unit.

\subsection{Graph Foundation Models}

Graph foundation models learn reusable structural or semantic representations through
self-supervision and lightweight adaptation \citep{graphmae2,SAMGPT,RiemannGFM,wang2024gft}.
Text-attributed models align graph structure with language, while multimodal foundations such as
UniGraph2 and PLANET extend this interface to visual attributes \citep{UniGraph2,PLANET}. Their
cross-graph training establishes the foundation setting, but the downstream interface is still
usually a final Euclidean node state.

For node classification, a fresh head maps one node state to a graph-specific label space. For
link prediction, a fresh head scores a pair against candidates. A final Euclidean node state can
serve either interface, but it need not preserve the relation information constructed at earlier
transport depths. Clifford products give one way to construct that information, while a reusable
foundation must also decide which states to retain for an unknown future prediction unit.
Section~\ref{sec:empirical} examines this access question before
Section~\ref{sec:methodology} introduces the Clifford field.

\section{Empirical Motivation}
\label{sec:empirical}

We first examine whether a multimodal graph foundation requires more than a wider Euclidean
embedding. The probes ask whether observed edges create relation state, how that state develops
across grades and transport depths, and whether retained grade and depth states support task-specific
access. They use 9,810 matched pairs, controlled blade isolation, and direct
grade and depth access (Appendix~\ref{app:probe-values}).

Panel (a) evaluates grade expansion and transport depth within one relation sequence. Panel (b)
compares direct grade and depth access with a generic readout. Panel (c) tests blade isolation while
keeping the basis channels. These probes separate interaction, transport, retention, and access
before the complete method is introduced.

% derived LaTeX asset for this paper.
\begingroup
\setlength{\intextsep}{6pt}
\begin{figure}[H]
\begingroup
\setlength{\abovecaptionskip}{2pt}
\centering
\includegraphics[width=\linewidth]{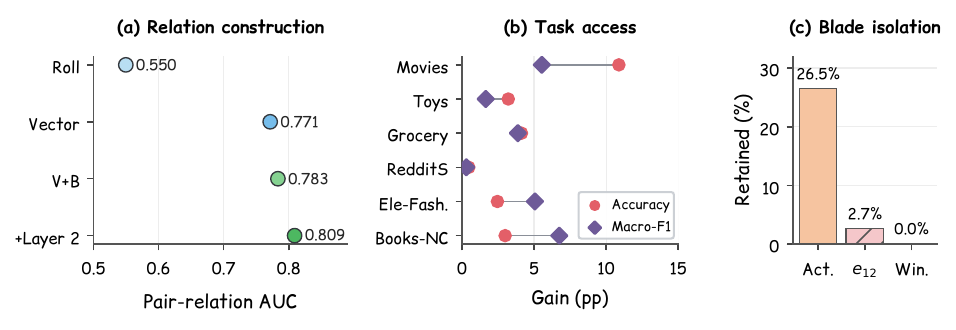}
\caption{\textbf{Empirical study.} (a) Pair-relation AUC after grade expansion and a second
transport layer. (b) Accuracy and Macro-F1 gains (percentage points) from direct grade--depth
access over a generic readout. (c) Rigid blade isolation reports the retained
active share (``Act.''), deepest $e_{12}$ energy (``$e_{12}$''), and matched validation wins
(``Win.'').}
\label{fig:empirical}
\endgroup
\end{figure}
\endgroup
 
Figure~\ref{fig:empirical}(a) separates relation construction from later composition.
Adding vector and bivector coordinates raises pair-relation AUC from 0.550 to 0.783; a second
transport raises it to 0.809. The first contrast is larger than the second, so the observation
supports cross-grade construction followed by complementary context, not depth as a substitute for
the geometric product. Figure~\ref{fig:empirical}(b) shows that direct grade-depth
access improves both NC metrics on all six examined
datasets. The gain ranges from 0.47 to 10.89 points for Accuracy and from 0.30 to 6.75 for Macro-F1.
Rigid blade isolation retains 26.53 percent of active parameters and 2.658 percent of deepest
$e_{12}$ energy, and wins none of the six matched comparisons (Fig.~\ref{fig:empirical}(c)).

The fixed-field Accuracy/Macro-F1 gains for Movies, Toys, Grocery, RedditS, Ele-Fashion, and
Books-NC are 10.89/5.54, 3.21/1.65, 4.13/3.88, 0.47/0.30, 2.46/5.06, and 3.00/6.75 points,
respectively. These fixed-field comparisons diagnose task access, not multi-seed transfer or LP
performance. Blade isolation preserves the coordinates but blocks learned cross-grade exchange; its
zero matched wins argue against permanent isolation, not against Clifford capacity in general. The
probes do not establish downstream transfer. They identify access and cross-grade-use requirements
that the full evaluation must test separately.

The task-access probe holds the encoded field fixed and changes only the readout. It therefore
tests whether retained states are usable by a new prediction unit, rather than whether the encoder
can be trained more effectively with a different objective. Conversely, blade isolation changes
the available interactions during transport. Reading the two controls together distinguishes
information that was never constructed from information constructed but hidden by a readout.

Together, these results suggest that graph transport creates prediction-relevant state
progressively. The first layer separates agreement from oriented interaction, and the second
composes that interaction with wider context. The probes jointly motivate preserving relation
state for pair prediction while keeping retained states directly accessible to downstream task
heads. Compressing these states before the task is known removes exactly the distinction that a
reusable foundation needs.

This observation leads to an encoder that constructs relations over observed edges, retains
intermediate grades and depths, and defers selection until a fresh head is fitted. Grade expansion
calls for Clifford relation construction, depth expansion for a second transport, and the access
probe for a retained grade-depth bank rather than a final-state readout. Blade isolation argues
for learned cross-grade exchange. Together with the need to preserve entity semantics for node
prediction, these findings motivate five design choices. ICE uses Clifford relation construction,
two-layer transport, a retained grade and depth bank, a protected Grade-1 semantic route, and
task-aligned field access. The controlled probes identify where relation information is
constructed and accessed. Section~\ref{sec:methodology} develops the five choices as one information
path, and Section~5 evaluates whether that path survives fresh heads, label
budgets, and matched component removals.

\begin{figure}[t]
\centering
\includegraphics[width=\linewidth]{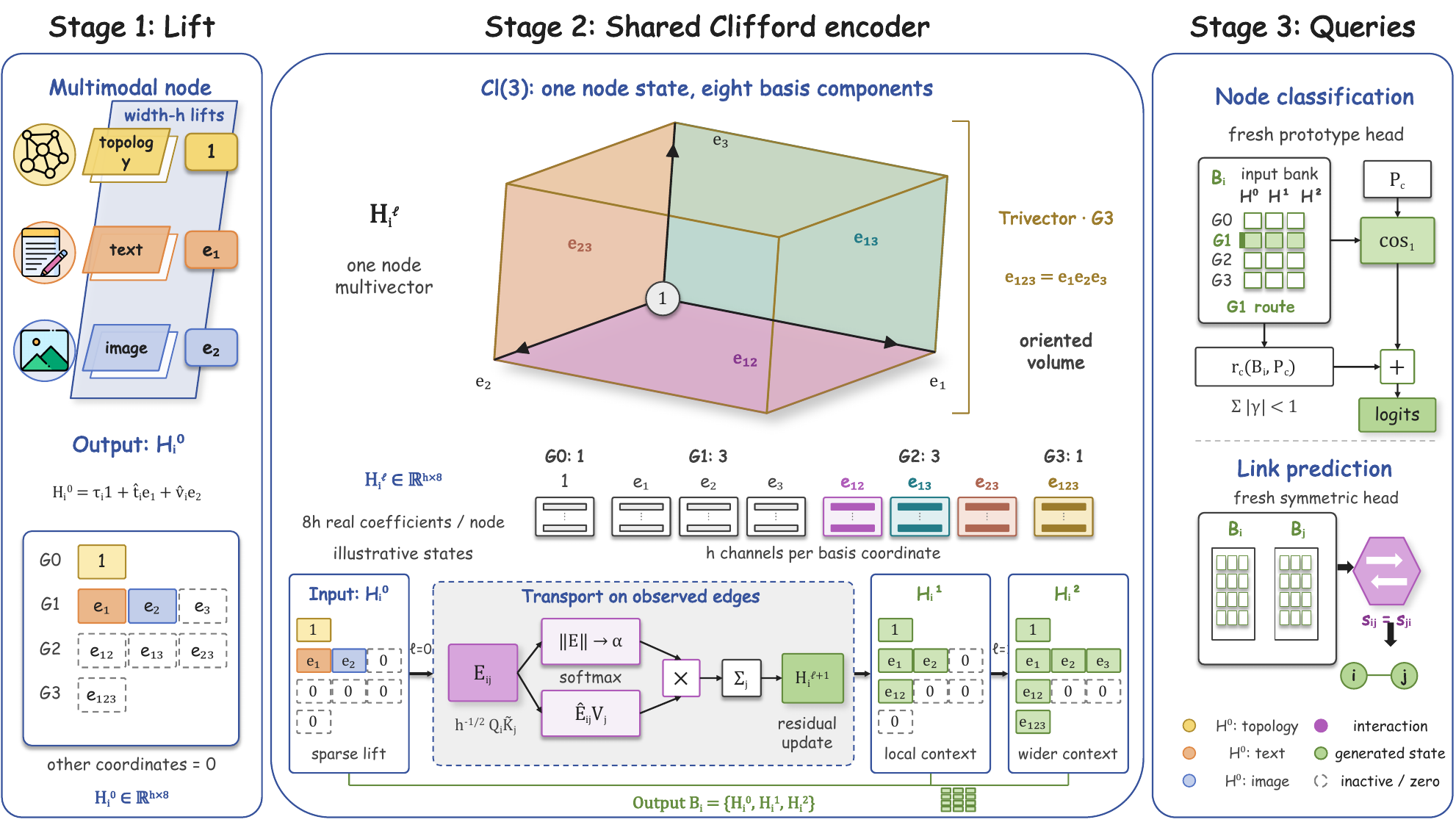}
\caption{\textbf{ICE framework.} One dataset-agnostic Clifford field connects multimodal input,
edge-aware graph transport, foundation learning, and fresh node and link heads.}
\label{fig:framework}
\end{figure}
\FloatBarrier

\section{Methodology}
\label{sec:methodology}

\subsection{Overview}

ICE keeps each graph-induced interaction addressable by algebraic grade and transport depth until a
task chooses its readout. The data flow is topology, text, and image lift, followed by two edge-aware
transports that produce and retain H0, H1, and H2 before fresh node or link heads query the field.
Read Fig.~\ref{fig:framework} from left to right: arrows are data flow, and the repeated field block is
the shared encoder reused across graphs.

\subsection{Clifford Field Construction and Learning}

Entity attributes describe a node, whereas graph neighborhoods describe how entities and modalities
interact. ICE uses $\Cl(3)$ because its geometric product represents symmetric agreement and an
oriented relation in one state, which later graph transport can compose.

\noindent\textbf{Multimodal and topological addresses.}
Lightweight adapters map text, image, and local topology at node $i$ to width-$h$ vectors
$\mathbf{t}_i$, $\mathbf{v}_i$, and $\boldsymbol{\tau}_i$. ICE initializes
\begin{equation}
\mathbf{H}^{0}_i=\boldsymbol{\tau}_i1+\widehat{\mathbf{t}}_i e_1+
\widehat{\mathbf{v}}_i e_2,
\label{eq:lift}
\end{equation}
where hats denote normalization. Topology enters the scalar address and the modalities enter
distinct vector addresses. Higher grades begin at zero and must therefore be produced by graph
computation. Learned maps may exchange information across grades. The addresses specify entry
points, not fixed destinations.

\noindent\textbf{Edge-wise relation construction.}
At layer $\ell$, shared multivector maps produce $\mathbf{Q}^{\ell}$,
$\mathbf{K}^{\ell}$, and $\mathbf{V}^{\ell}$. For every observed edge $j\!\to\!i$, ICE forms
$\mathbf{E}^{\ell}_{ij}=\mathbf{Q}^{\ell}_{i}\rev{\mathbf{K}^{\ell}_{j}}/\sqrt{h}$ and transports
the neighboring value as
\begin{align}
\alpha^{\ell}_{ij}
  &=\operatorname{softmax}_{j\in\mathcal{N}(i)}
  \left(\operatorname{mean}_{c}\left\|\mathbf{E}^{\ell}_{ij,c}\right\|/\tau\right), \\
\mathbf{U}^{\ell}_{i}
  &=\sum_{j\in\mathcal{N}(i)}\alpha^{\ell}_{ij}
  \left(\frac{\mathbf{E}^{\ell}_{ij}}
  {\left\|\mathbf{E}^{\ell}_{ij}\right\|+\epsilon}\right)\mathbf{V}^{\ell}_{j}.
\label{eq:transport}
\end{align}
Its magnitude controls neighbor mass, whereas its normalized multivector direction transforms the
value. A residual projection, normalization, and feed-forward map then produce
$\mathbf{H}^{\ell+1}_i$ without collapsing relation type into a scalar attention weight.

\noindent\textbf{Two-layer grade and depth construction.}
The first layer creates scalar agreement and bivector interaction in H1. The second composes those
relations with a new graph direction, returning information to Grade 1 or extending it to Grade 3
in H2. ICE retains H0, H1, and H2 so task queries can access each construction stage.

\noindent\textbf{Foundation objectives and semantic protection.}
ICE learns the field with two complementary objectives and one protected path. Semantic topology
contrast aligns a masked node modality with the visible semantic context supplied by its neighbors.
Masked geometric reconstruction requires the transported field to recover the masked vector and
the higher-grade coordinates created from it. Their joint objective is
\begin{equation}
\mathcal{L}_{\mathrm{pre}}=\lambda_{\mathrm{sem}}\mathcal{L}_{\mathrm{sem}}
+\lambda_{\mathrm{geo}}\sum_{g=1}^{3}\omega_g
\left\|\grade{D_{\psi}(\mathbf{H}^{2}_{i,\mathrm{vis}})}{g}
-\grade{\mathbf{H}^{0}_{i}}{g}\right\|_2^2.
\label{eq:pretrain}
\end{equation}
Semantic topology contrast constrains entity meaning, and geometric reconstruction constrains
cross-grade content. An additive Grade-1 path stays present through the transported field, giving
node prediction a stable semantic route while geometric coordinates refine its score. The shared
maps receive only graph topology and node attributes, never a dataset identifier or target label
space. Each transport output is consumed by the next layer or retained for a task head.

\subsection{Task-Aligned Field Access}

\begin{definition}[Addressable Clifford field]
For node $i$, ICE exports
$\mathcal{B}_i=\{\grade{\mathbf{H}^{\ell}_i}{g}\mid
\ell\in\{0,1,2\},\ g\in\{0,1,2,3\}\}$. H0 contains the multimodal addresses from
Eq.~\ref{eq:lift}; H1 and H2 contain the relations formed by Eq.~\ref{eq:transport}. This retained
bank is exactly the shared-encoder output read by every fresh task head.
\label{def:addressable-field}
\end{definition}

\noindent\textbf{Node prediction.}
Node classification reads $\mathcal{B}_i$. A prototype $\mathbf{P}_c$ supplies an always-present
Grade-1 cosine score, while bounded coefficients query the remaining addresses.
\begin{equation}
s^{\mathrm{NC}}_{ic}=\cos_1(\mathbf{H}^{\mathrm{pool}}_i,\mathbf{P}_c)
+\sum_{\ell,g}\gamma_{c\ell g}\cos_g(\mathbf{H}^{\ell}_i,\mathbf{P}_c),
\qquad \sum_{\ell,g}|\gamma_{c\ell g}|\leq\rho<1.
\label{eq:nc-head}
\end{equation}
The shared field remains unchanged when the head converts these scores into predictions.

\noindent\textbf{Pair prediction.}
Link prediction reads two fields. Grade-compatible projections first assemble
$\mathbf{z}_i=\mathop{\Vert}_{\ell,g}q_{\ell g}(\grade{\mathbf{H}^{\ell}_i}{g})$. The pair head
then uses
\begin{equation}
\mathbf{p}_{ij}=\left[\mathbf{z}_i+\mathbf{z}_j\,\middle\Vert\,
|\mathbf{z}_i-\mathbf{z}_j|\,\middle\Vert\,
\mathbf{z}_i\odot\mathbf{z}_j\right],
\qquad s^{\mathrm{LP}}_{ij}=h_{\mathrm{LP}}(\mathbf{p}_{ij}).
\label{eq:lp-head}
\end{equation}
The symmetric descriptor combines shared content, disagreement, and multiplicative agreement.
Node and link heads differ in prediction unit, not in the dataset-agnostic foundation they receive.

\subsection{Theoretical Analysis}

The analysis separates standard Clifford-algebra identities from properties of ICE's storage,
transport, and heads. We use the direct-sum grade decomposition and geometric-product identities
\citep{dorst2007geometric,LION}. Permutation equivariance follows the shared message-passing
construction used in graph neural networks \citep{GraphSAGE,gin}. Appendix
Section~\ref{app:properties} proves the ICE-specific statements under their explicit assumptions.

\begin{theorem}[Cross-grade reachability]
\label{thm:cross-grade}
For vectors $a,b,c\in\mathbb{R}^{3}\subset\Cl(3)$, the geometric products satisfy
\begin{equation}
ab=a\cdot b+a\wedge b,
\qquad
(a\wedge b)c=\grade{(a\wedge b)c}{1}+\grade{(a\wedge b)c}{3}.
\label{eq:main-grade-routing}
\end{equation}
Thus vector interaction contains scalar agreement and a bivector, while composing the bivector with
a vector lies in Grades 1 and 3. In ICE these are the local interaction formed by the first
edge-wise transport and the broader composition available to the second transport. This establishes
which grades are reachable for a GFM reuse interface. It does not imply that every grade is nonzero
or predictive for every graph.
\end{theorem}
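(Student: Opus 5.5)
I will prove both identities from the defining relations of $\Cl(3)$ on an orthonormal basis $e_1,e_2,e_3$: $e_k^2=1$ and $e_ke_l=-e_le_k$ for $k\neq l$. These give the grade-graded basis $\{1\}\cup\{e_k\}\cup\{e_{kl}\}_{k<l}\cup\{e_{123}\}$, and I will use the direct-sum decomposition $\Cl(3)=\bigoplus_{g=0}^{3}\grade{\Cl(3)}{g}$ from the standard references cited before the theorem.

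For the first identity I will split the product by symmetry. Expanding $a=\sum_k a_ke_k$ and $b=\sum_l b_le_l$ gives $ab=\sum_{k,l}a_kb_l e_ke_l$. The diagonal terms $k=l$ contribute $\sum_k a_kb_k=a\cdot b$, a scalar. The off-diagonal terms pair up as $(a_kb_l-a_lb_k)e_{kl}$ for $k<l$, which is by definition $a\wedge b$, a grade-2 element. Equivalently, $a\cdot b=\tfrac12(ab+ba)$ and $a\wedge b=\tfrac12(ab-ba)$, so the decomposition is exactly the symmetric and antisymmetric parts of the product.

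For the second identity, linearity of the geometric product reduces the claim to basis bivectors $e_{kl}$ with $k<l$ multiplied on the right by basis vectors $e_m$. I will split into two cases. If $m\in\{k,l\}$, the anticommutation and $e_m^2=1$ give $\pm e_{k}$ or $\pm e_{l}$, which is grade 1; for example $e_{12}e_2=e_1$ and $e_{12}e_1=-e_2$. If $m\notin\{k,l\}$, the indices are distinct and the product is $\pm e_{123}$, which is grade 3. Summing over components, $(a\wedge b)c$ has no grade-0 or grade-2 part. A coordinate-free confirmation is the identity $(a\wedge b)c=(a\wedge b)\cdot c+a\wedge b\wedge c$, whose two terms are the grade-1 and grade-3 parts.

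The main subtlety is interpretive rather than computational. The reachability statement must be read as a claim about \emph{which grades can be nonzero}, not that they are. In ICE the edge-wise term $\mathbf{E}^{\ell}_{ij}=\mathbf{Q}^{\ell}_i\rev{\mathbf{K}^{\ell}_j}$ involves general multivectors produced by learned maps, not pure vectors. So I will state the ICE connection conditionally. When the grade-1 parts of $\mathbf{Q}^{1}$ and $\mathbf{K}^{1}$ are nonzero, the first transport can populate grades 0 and 2 in $\mathbf{H}^1$, with the reversion only flipping the sign of the bivector. When $\mathbf{H}^1$ carries a bivector and the second layer multiplies it by a vector direction, $\mathbf{H}^2$ can populate grades 1 and 3. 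This is precisely the limitation the theorem already acknowledges, so no further assumption is needed.
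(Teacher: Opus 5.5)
Your proof is correct. The first identity is handled exactly as in the paper: the same coordinate expansion using $e_ke_l+e_le_k=2\delta_{kl}$, split into a grade-0 diagonal and an antisymmetric grade-2 off-diagonal part.

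The second identity is where you differ. The paper only invokes the standard split of a bivector--vector product into a contraction plus an outer product. You instead reduce by linearity to $e_{kl}e_m$ and check the cases $m\in\{k,l\}$ and $m\notin\{k,l\}$. Your route is self-contained and convention-free, which matters slightly here. The paper writes the grade-1 part as $B\mathbin{\lrcorner}c$, and under the usual reading of $\lrcorner$ as a left contraction that term would vanish for a bivector acting on a vector. What is meant is the right contraction, i.e.\ your $(a\wedge b)\cdot c=(b\cdot c)a-(a\cdot c)b$.

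The coordinate-free form, in turn, gives the geometric content behind the theorem's caveat. Take $a\wedge b\neq 0$ and write $c=c_\parallel+c_\perp$ relative to the plane of $a,b$. The grade-1 part depends only on $c_\parallel$, and the grade-3 part $a\wedge b\wedge c$ depends only on $c_\perp$, so each vanishes under an explicit condition. Your case split is exactly this in-plane/normal split for a basis bivector.

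Your hedged ICE link is also the right strength, for a concrete reason. Eq.~\ref{eq:transport} multiplies the normalized $\mathbf{E}^{\ell}_{ij}$ into $\mathbf{V}^{\ell}_j$. So for pure-vector $\mathbf{Q},\mathbf{K},\mathbf{V}$, your two identities place the first-layer message in grades 1 and 3 only. Bivector content in $\mathbf{H}^1$ therefore relies on non-vector components (e.g.\ those fed by the scalar topology address) or on the learned maps. Finally, for the grade-1 parts the reversion acts trivially, since a vector is its own reverse.
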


\begin{lemma}[Complete grade and depth access]
\label{lem:field-access}
If ICE retains every grade at depths H0, H1, and H2, then each transported state is uniquely
recoverable:
\begin{equation}
\mathbf{H}^{\ell}_i=\sum_{g=0}^{3}\grade{\mathbf{H}^{\ell}_i}{g},
\qquad \ell\in\{0,1,2\}.
\label{eq:main-field-completeness}
\end{equation}
This is the complete-access meaning of Definition~\ref{def:addressable-field}. The retained field
is the output of the shared encoder; an NC head reads node states and an LP head reads a pair, so
both prediction units can select from the same dataset-agnostic state without requiring the encoder
to collapse grades or depths in advance.
\end{lemma}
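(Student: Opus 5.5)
The proof rests on the direct-sum grade decomposition of $\Cl(3)$, used in the same way as in Theorem~\ref{thm:cross-grade}. First fix the basis $\{1,e_1,e_2,e_3,e_{12},e_{13},e_{23},e_{123}\}$, which gives
\begin{equation*}
\Cl(3)=\textstyle\bigoplus_{g=0}^{3}\Cl(3)_g ,
\end{equation*}
where $\Cl(3)_g$ is spanned by the basis blades of grade $g$. Each grade projection $\grade{\cdot}{g}$ is the coordinate projection onto $\Cl(3)_g$. Every multivector is a unique linear combination of the eight blades, so these projections are idempotent, mutually annihilating ($\grade{\grade{X}{g}}{g'}=0$ for $g\neq g'$), and sum to the identity. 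Applying this channelwise gives a decomposition for width-$h$ multivector states, since $\mathbf{H}^{\ell}_i\in\Cl(3)^{h}$ and the projections act coordinate by coordinate.

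Next we check that each retained state lies in this space. For $\ell=0$, Eq.~\ref{eq:lift} places $\mathbf{H}^{0}_i$ in $\Cl(3)^h$, with components only in grades $0$ and $1$, so the grade-2 and grade-3 parts are zero. For $\ell=1,2$, Eq.~\ref{eq:transport} is built from geometric products, reversion, scalar normalization, and scalar softmax weights. The residual projection, normalization and feed-forward map are the paper's shared multivector maps, so they take values in $\Cl(3)^h$ as well. Closure of $\Cl(3)$ under these operations therefore gives $\mathbf{H}^{\ell}_i\in\Cl(3)^h$ for every $\ell$.

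With both facts in place, the identity in Eq.~\ref{eq:main-field-completeness} is simply the statement that the projections sum to the identity. Uniqueness follows from mutual annihilation. Suppose $\mathbf{H}^{\ell}_i=\sum_g X_g$ with each $X_g$ of pure grade $g$. Projecting onto grade $g$ gives $X_g=\grade{\mathbf{H}^{\ell}_i}{g}$. So the bank $\mathcal{B}_i$ of Definition~\ref{def:addressable-field} determines each $\mathbf{H}^{\ell}_i$, and conversely each $\mathbf{H}^{\ell}_i$ determines its grade components. Any head that is a function of $\{\mathbf{H}^{0}_i,\mathbf{H}^{1}_i,\mathbf{H}^{2}_i\}$ is therefore a function of $\mathcal{B}_i$, which gives the access claim for the NC head (Eq.~\ref{eq:nc-head}) and, applied to the two endpoints, for the LP head (Eq.~\ref{eq:lp-head}).

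The only step needing care is the membership check in the second paragraph. The argument needs the normalization and feed-forward layers to be genuinely $\Cl(3)^h$-valued rather than maps into an unstructured Euclidean space that is then reinterpreted. The plan is to state this explicitly as the standing hypothesis ``ICE retains every grade'' and to read it as requiring that the layers act on the eight blade coordinates. Under that hypothesis the rest is linear algebra.
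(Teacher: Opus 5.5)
Your proposal is correct and follows the paper's proof. The direct-sum decomposition $\Cl(3)=\bigoplus_{g=0}^{3}\Cl(3)_g$, applied channelwise, gives both the identity and the uniqueness of the grade components. Storing every summand at every depth in $\mathcal{B}_i$ (Definition~\ref{def:addressable-field}) then recovers $\mathbf{H}^{0},\mathbf{H}^{1},\mathbf{H}^{2}$, and hence gives access for both heads.

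Your membership check and your projection argument for uniqueness only make explicit what the paper leaves implicit. The membership step also needs no extra hypothesis on the layers. Any state stored as eight blade coordinates per channel is already an element of $\Cl(3)^h$, and the decomposition holds for it regardless of how it was computed.
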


\begin{theorem}[Node-permutation equivariance]
\label{thm:equivariance}
Assume shared nodewise maps, permutation-equivariant neighborhood aggregation over the observed
edges, and the pair descriptor of Eq.~\ref{eq:lp-head}. For every node permutation matrix
$\mathbf{P}$,
\begin{equation}
F_{\theta}(\mathbf{P}\mathcal{G})=\mathbf{P}F_{\theta}(\mathcal{G}).
\label{eq:main-equivariance}
\end{equation}
Shared ICE transport therefore reindexes messages when node labels change and remains a valid graph
operator. The NC output follows the same relabeling, while the symmetric LP descriptor preserves
the score for an exchanged pair. The assumption matches the common transport map across graphs and
does not assert invariance to changes in graph structure.
\end{theorem}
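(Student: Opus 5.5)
The plan is to prove equivariance stage by stage and then compose. Fix a permutation $\pi$ with matrix $\mathbf{P}$. The permuted graph $\mathbf{P}\mathcal{G}$ has features $\mathbf{P}\mathbf{X}^{(m)}$ and edges $\{(\pi(j),\pi(i)) : (j,i)\in\mathcal{E}\}$, so that $\mathcal{N}'(\pi(i))=\pi(\mathcal{N}(i))$. The induction hypothesis at each stage is
\[
\mathbf{H}'^{\ell}_{\pi(i)}=\mathbf{H}^{\ell}_i \quad\text{for every node } i.
\]
Because the grade projections $\grade{\cdot}{g}$ act nodewise and linearly, the same reindexing then holds for every element of $\mathcal{B}_i$ (Definition~\ref{def:addressable-field}). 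By Lemma~\ref{lem:field-access}, the bank is the full output of $F_\theta$, so equivariance of the bank is exactly Eq.~\ref{eq:main-equivariance}.

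\textbf{Base case: the lift.} The adapters producing $\mathbf{t}_i,\mathbf{v}_i$ are shared nodewise maps, so they commute with $\mathbf{P}$, and normalization is also nodewise. For the topology vector $\boldsymbol{\tau}_i$, I will use the assumption that it is computed by a permutation-equivariant local operator (for example, degree- or neighborhood-based). This is the one place where the ``local topology'' adapter must be read as covered by the hypothesis, and I will state it explicitly. Eq.~\ref{eq:lift} is then a nodewise assembly, so $\mathbf{H}'^0=\mathbf{P}\mathbf{H}^0$.

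\textbf{Inductive step: the transport of Eq.~\ref{eq:transport}.} Assume $\mathbf{H}'^{\ell}_{\pi(i)}=\mathbf{H}^{\ell}_i$. The maps producing $\mathbf{Q},\mathbf{K},\mathbf{V}$ are shared and nodewise, so $\mathbf{Q}'_{\pi(i)}=\mathbf{Q}_i$, and likewise for $\mathbf{K}$ and $\mathbf{V}$. Consequently $\mathbf{E}'_{\pi(i)\pi(j)}=\mathbf{E}_{ij}$ for every observed edge. The softmax over $\mathcal{N}'(\pi(i))=\pi(\mathcal{N}(i))$ is a softmax over the same multiset of logits, so $\alpha'_{\pi(i)\pi(j)}=\alpha_{ij}$. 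The sum defining $\mathbf{U}_i$ is over a finite index set and is invariant to reordering its terms, hence $\mathbf{U}'_{\pi(i)}=\mathbf{U}_i$. The residual projection, normalization and feed-forward map are nodewise, which gives $\mathbf{H}'^{\ell+1}=\mathbf{P}\mathbf{H}^{\ell+1}$. Running the induction for $\ell=0,1$ completes the argument for the encoder.

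\textbf{Heads.} The NC score in Eq.~\ref{eq:nc-head} is a nodewise function of $\mathcal{B}_i$, with class prototypes independent of node indices. It therefore satisfies $s'^{\mathrm{NC}}_{\pi(i)c}=s^{\mathrm{NC}}_{ic}$. Here I also need $\mathbf{H}^{\mathrm{pool}}_i$ to be a nodewise pooling over depths, which I take to be the case. For LP, $\mathbf{z}_i$ is nodewise, so $s'^{\mathrm{LP}}_{\pi(i)\pi(j)}=s^{\mathrm{LP}}_{ij}$. In addition, $\mathbf{p}_{ij}=\mathbf{p}_{ji}$ because $+$, $|\cdot|$ and $\odot$ are symmetric, which gives the exchanged-pair claim.

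\textbf{Main obstacle.} The substantive step is not the algebra but pinning down the hypotheses. In particular, I must ensure that $\boldsymbol{\tau}_i$ and any normalization (e.g.\ a batch or graph normalization) do not depend on node order beyond permutation-equivariant statistics. I would handle this by noting that graph-level statistics are permutation-invariant, so they preserve equivariance, and by stating that any positional-index feature is excluded by assumption.
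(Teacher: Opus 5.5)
Your proposal is correct and follows essentially the same route as the paper. Shared nodewise maps give $\mathbf{Q}'=\mathbf{P}\mathbf{Q}$ (and likewise for $\mathbf{K},\mathbf{V}$), so $\mathbf{E}'_{\pi(i)\pi(j)}=\mathbf{E}_{ij}$ and the softmax and neighborhood sum are simply reindexed; induction over the two transport layers then gives $\mathbf{H}^{\ell\prime}_{\pi(i)}=\mathbf{H}^{\ell}_i$, and the pointwise NC head and symmetric LP descriptor inherit the relabeling. Your explicit caveats on $\boldsymbol{\tau}_i$, $\mathbf{H}^{\mathrm{pool}}_i$, and non-pointwise normalization only make precise what the paper covers by saying the lift and normalization are shared and commute with the permutation.
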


\begin{lemma}[Protected Grade-1 Semantic Route]
\label{lem:semantic-residual}
If the NC head in Eq.~\ref{eq:nc-head} obeys
$\sum_{\ell,g}|\gamma_{c\ell g}|\leq\rho<1$ and each grade-wise cosine has magnitude at most one,
then
\begin{equation}
\left|s^{\mathrm{NC}}_{ic}-\cos_1(\mathbf{H}^{\mathrm{pool}}_i,\mathbf{P}_c)\right|\leq\rho.
\label{eq:main-semantic-bound}
\end{equation}
The Grade-1 prototype score in ICE is the protected node route. The bound limits how far task
residuals over other grades and depths can move that score, so a fresh node head can add interaction
information without replacing the base semantic path.
\end{lemma}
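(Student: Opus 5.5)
The plan is to subtract the protected term and bound the residual sum directly. By Eq.~\ref{eq:nc-head}, the difference is
\begin{equation*}
s^{\mathrm{NC}}_{ic}-\cos_1(\mathbf{H}^{\mathrm{pool}}_i,\mathbf{P}_c)=\sum_{\ell,g}\gamma_{c\ell g}\cos_g(\mathbf{H}^{\ell}_i,\mathbf{P}_c).
\end{equation*}
The index set is finite: $\ell\in\{0,1,2\}$ and $g\in\{0,1,2,3\}$, as fixed by Definition~\ref{def:addressable-field}. Hence the sum is well defined, and no convergence issue arises.

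The steps are as follows.
\begin{itemize}
\item Apply the triangle inequality to get $\bigl|\sum_{\ell,g}\gamma_{c\ell g}\cos_g(\cdot)\bigr|\le\sum_{\ell,g}|\gamma_{c\ell g}|\,|\cos_g(\mathbf{H}^{\ell}_i,\mathbf{P}_c)|$.
\item Use the hypothesis $|\cos_g|\le 1$ for every term to bound this by $\sum_{\ell,g}|\gamma_{c\ell g}|$.
\item Use the coefficient constraint $\sum_{\ell,g}|\gamma_{c\ell g}|\le\rho$ to conclude the bound $\rho$.
\end{itemize}
Since $\rho<1$, I will also remark that the residual can never fully cancel a nonzero protected score of magnitude greater than $\rho$. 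This is the interpretive content of ``protection,'' but it is not needed for the inequality itself.

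The only step needing care is the hypothesis on the grade-wise cosines. The lemma assumes it, but I will justify it briefly so the assumption is not vacuous. I will define $\cos_g(\mathbf{A},\mathbf{B})=\langle\grade{\mathbf{A}}{g},\grade{\mathbf{B}}{g}\rangle/(\|\grade{\mathbf{A}}{g}\|\,\|\grade{\mathbf{B}}{g}\|+\epsilon)$, with the convention that it is $0$ when a grade component vanishes. This matters because higher grades of $\mathbf{H}^0_i$ are zero by Eq.~\ref{eq:lift}. Cauchy--Schwarz on the Euclidean inner product of the coordinates of each graded subspace (the direct-sum decomposition used in Lemma~\ref{lem:field-access}) then gives magnitude at most one. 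The stabilizer $\epsilon\ge 0$ only shrinks the magnitude.

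I expect this zero-component and $\epsilon$ convention to be the main (minor) obstacle. Everything else is a one-line triangle inequality.
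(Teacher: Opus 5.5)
Your proof is correct and follows the paper's argument: isolate the residual $\sum_{\ell,g}\gamma_{c\ell g}\cos_g(\mathbf{H}^{\ell}_i,\mathbf{P}_c)$, apply the triangle inequality, bound each cosine by one, and then use the coefficient constraint. Your added Cauchy--Schwarz justification of $|\cos_g|\le 1$, with the $\epsilon$-stabilized denominator and the zero-grade convention, is a harmless refinement of a fact the paper simply asserts.
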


For a sampled subgraph, ICE requires
$O(L|E_b|hB^2+L|V_b|hB)$ transport time and
$O(L|V_b|hB+|E_b|hB)$ retained state. Since $B=8$ and $L=2$, the implementation is linear in the
sampled graph size up to fixed Clifford factors; Appendix~\ref{app:properties} gives the derivation.

\section{Experiments}

We evaluate whether one ICE foundation transfers across prediction units and label regimes, then ask
whether the observed behavior follows the proposed construction--retention--access path.
Appendix~\ref{app:protocol} gives the complete data, optimization, and evaluation settings.

\noindent\textbf{Experimental settings.}
One encoder is pretrained for five epochs over eleven multimodal graphs. Six node-classification
and three link-prediction datasets use fresh ten-epoch heads. We report NC Accuracy and Macro-F1,
LP MRR against 150 candidates, and 3/5/10-shot Accuracy for 5-way NC and balanced 2-way link
classification. Every entry is the mean and sample standard deviation over eight validation-selected
runs. Comparators span supervised, multimodal, self-supervised, graph-foundation, and multimodal
graph-foundation models
\citep{MMGCN,MGAT,graphmae2,SAMGPT,RiemannGFM,UniGraph2,PLANET}.

\subsection{Transfer Across Prediction Units and Label Budgets}

We first ask whether one pretrained field can serve both node and pair prediction when a fresh
head receives either full supervision or only a few labels. Tables~\ref{tab:main-nc}--\ref{tab:main-few-nc}
report supervised NC, supervised LP, few-shot link classification, and few-shot NC.
Shared-foundation methods remain separate from task-specific specialists, whose training opportunity
differs (Appendix~\ref{app:protocol}).

% derived LaTeX asset for this paper.
\begin{table}[t]
\centering
\caption{\textbf{Node classification.} Accuracy and Macro-F1 for each dataset over eight seeds.
Best means are bold and second-best means are underlined.}
\label{tab:main-results}
\label{tab:main-nc}
\begingroup
\fontsize{6.4}{7.0}\selectfont
\setlength{\tabcolsep}{0pt}
\renewcommand{\arraystretch}{1.04}
\begin{tabular*}{\linewidth}{@{\extracolsep{\fill}}l*{12}{c}@{}}
\toprule
\textbf{Method} & \multicolumn{2}{c}{\textbf{RedditS}} & \multicolumn{2}{c}{\textbf{Movies}} &
\multicolumn{2}{c}{\textbf{Grocery}} & \multicolumn{2}{c}{\textbf{Toys}} &
\multicolumn{2}{c}{\textbf{Ele-F.}} & \multicolumn{2}{c}{\textbf{Books-NC}} \\
\cmidrule(lr){2-3}\cmidrule(lr){4-5}\cmidrule(lr){6-7}\cmidrule(lr){8-9}\cmidrule(lr){10-11}\cmidrule(l){12-13}
 & \textbf{Acc.} & \textbf{F1} & \textbf{Acc.} & \textbf{F1} & \textbf{Acc.} & \textbf{F1} &
\textbf{Acc.} & \textbf{F1} & \textbf{Acc.} & \textbf{F1} & \textbf{Acc.} & \textbf{F1} \\
\midrule
\multicolumn{13}{l}{\textit{Traditional multimodal}} \\
MMGCN & 90.27{\fontsize{4.85}{5.10}\selectfont $\pm$0.34} & 84.22{\fontsize{4.85}{5.10}\selectfont $\pm$0.73} & 53.41{\fontsize{4.85}{5.10}\selectfont $\pm$1.03} & 41.66{\fontsize{4.85}{5.10}\selectfont $\pm$2.03} & 82.56{\fontsize{4.85}{5.10}\selectfont $\pm$0.49} & 73.83{\fontsize{4.85}{5.10}\selectfont $\pm$0.93} & 80.02{\fontsize{4.85}{5.10}\selectfont $\pm$0.64} & 76.36{\fontsize{4.85}{5.10}\selectfont $\pm$1.23} & 86.59{\fontsize{4.85}{5.10}\selectfont $\pm$0.08} & 68.85{\fontsize{4.85}{5.10}\selectfont $\pm$0.35} & 83.22{\fontsize{4.85}{5.10}\selectfont $\pm$0.10} & 71.28{\fontsize{4.85}{5.10}\selectfont $\pm$0.22} \\
MGAT & 92.78{\fontsize{4.85}{5.10}\selectfont $\pm$0.50} & 87.27{\fontsize{4.85}{5.10}\selectfont $\pm$0.53} & 53.87{\fontsize{4.85}{5.10}\selectfont $\pm$0.50} & 44.09{\fontsize{4.85}{5.10}\selectfont $\pm$1.60} & 83.74{\fontsize{4.85}{5.10}\selectfont $\pm$0.62} & 74.77{\fontsize{4.85}{5.10}\selectfont $\pm$1.11} & 79.61{\fontsize{4.85}{5.10}\selectfont $\pm$0.74} & 77.09{\fontsize{4.85}{5.10}\selectfont $\pm$0.87} & 84.84{\fontsize{4.85}{5.10}\selectfont $\pm$0.08} & 69.62{\fontsize{4.85}{5.10}\selectfont $\pm$0.21} & 82.91{\fontsize{4.85}{5.10}\selectfont $\pm$0.04} & 71.45{\fontsize{4.85}{5.10}\selectfont $\pm$0.11} \\
\midrule
\multicolumn{13}{l}{\textit{Self-supervised graph models}} \\
GRACE & 93.01{\fontsize{4.85}{5.10}\selectfont $\pm$0.53} & 88.39{\fontsize{4.85}{5.10}\selectfont $\pm$1.12} & 48.09{\fontsize{4.85}{5.10}\selectfont $\pm$0.97} & 37.18{\fontsize{4.85}{5.10}\selectfont $\pm$1.33} & 70.83{\fontsize{4.85}{5.10}\selectfont $\pm$0.81} & 60.69{\fontsize{4.85}{5.10}\selectfont $\pm$1.05} & 72.82{\fontsize{4.85}{5.10}\selectfont $\pm$0.66} & 69.09{\fontsize{4.85}{5.10}\selectfont $\pm$0.63} & 83.58{\fontsize{4.85}{5.10}\selectfont $\pm$0.11} & 70.09{\fontsize{4.85}{5.10}\selectfont $\pm$0.47} & 74.96{\fontsize{4.85}{5.10}\selectfont $\pm$0.06} & 70.09{\fontsize{4.85}{5.10}\selectfont $\pm$0.11} \\
GraphMAE2 & 92.81{\fontsize{4.85}{5.10}\selectfont $\pm$0.44} & 87.93{\fontsize{4.85}{5.10}\selectfont $\pm$0.37} & 50.08{\fontsize{4.85}{5.10}\selectfont $\pm$0.77} & 38.68{\fontsize{4.85}{5.10}\selectfont $\pm$1.63} & 76.24{\fontsize{4.85}{5.10}\selectfont $\pm$0.60} & 66.74{\fontsize{4.85}{5.10}\selectfont $\pm$1.33} & 75.11{\fontsize{4.85}{5.10}\selectfont $\pm$0.52} & 71.80{\fontsize{4.85}{5.10}\selectfont $\pm$0.50} & 83.32{\fontsize{4.85}{5.10}\selectfont $\pm$0.31} & 65.92{\fontsize{4.85}{5.10}\selectfont $\pm$0.59} & 74.15{\fontsize{4.85}{5.10}\selectfont $\pm$0.22} & 69.20{\fontsize{4.85}{5.10}\selectfont $\pm$0.28} \\
\midrule
\multicolumn{13}{l}{\textit{Graph foundation models}} \\
RiemannGFM & 91.63{\fontsize{4.85}{5.10}\selectfont $\pm$0.45} & 85.20{\fontsize{4.85}{5.10}\selectfont $\pm$1.13} & 52.80{\fontsize{4.85}{5.10}\selectfont $\pm$0.43} & 40.74{\fontsize{4.85}{5.10}\selectfont $\pm$1.25} & 82.62{\fontsize{4.85}{5.10}\selectfont $\pm$0.46} & 74.90{\fontsize{4.85}{5.10}\selectfont $\pm$1.55} & 77.85{\fontsize{4.85}{5.10}\selectfont $\pm$0.45} & 74.84{\fontsize{4.85}{5.10}\selectfont $\pm$0.60} & 87.07{\fontsize{4.85}{5.10}\selectfont $\pm$0.20} & 70.45{\fontsize{4.85}{5.10}\selectfont $\pm$1.32} & 78.13{\fontsize{4.85}{5.10}\selectfont $\pm$0.17} & 70.73{\fontsize{4.85}{5.10}\selectfont $\pm$0.24} \\
GFT & 93.02{\fontsize{4.85}{5.10}\selectfont $\pm$0.37} & 87.00{\fontsize{4.85}{5.10}\selectfont $\pm$2.03} & 51.33{\fontsize{4.85}{5.10}\selectfont $\pm$0.67} & 28.14{\fontsize{4.85}{5.10}\selectfont $\pm$1.82} & 76.80{\fontsize{4.85}{5.10}\selectfont $\pm$2.22} & 59.11{\fontsize{4.85}{5.10}\selectfont $\pm$3.02} & 79.52{\fontsize{4.85}{5.10}\selectfont $\pm$0.58} & 76.00{\fontsize{4.85}{5.10}\selectfont $\pm$0.92} & 87.14{\fontsize{4.85}{5.10}\selectfont $\pm$0.22} & 70.33{\fontsize{4.85}{5.10}\selectfont $\pm$1.24} & 75.93{\fontsize{4.85}{5.10}\selectfont $\pm$0.41} & 66.18{\fontsize{4.85}{5.10}\selectfont $\pm$0.30} \\
SAMGPT & 93.11{\fontsize{4.85}{5.10}\selectfont $\pm$0.19} & 87.12{\fontsize{4.85}{5.10}\selectfont $\pm$0.64} & 50.25{\fontsize{4.85}{5.10}\selectfont $\pm$0.28} & 34.21{\fontsize{4.85}{5.10}\selectfont $\pm$1.37} & 76.41{\fontsize{4.85}{5.10}\selectfont $\pm$0.54} & 63.40{\fontsize{4.85}{5.10}\selectfont $\pm$0.88} & 73.81{\fontsize{4.85}{5.10}\selectfont $\pm$0.41} & 67.12{\fontsize{4.85}{5.10}\selectfont $\pm$0.73} & 83.81{\fontsize{4.85}{5.10}\selectfont $\pm$0.11} & 69.73{\fontsize{4.85}{5.10}\selectfont $\pm$0.39} & 74.29{\fontsize{4.85}{5.10}\selectfont $\pm$0.11} & 66.57{\fontsize{4.85}{5.10}\selectfont $\pm$0.29} \\
\midrule
\multicolumn{13}{l}{\textit{Multimodal graph foundations}} \\
UniGraph2 & 93.65{\fontsize{4.85}{5.10}\selectfont $\pm$0.17} & 87.91{\fontsize{4.85}{5.10}\selectfont $\pm$0.68} & 53.02{\fontsize{4.85}{5.10}\selectfont $\pm$0.53} & 43.43{\fontsize{4.85}{5.10}\selectfont $\pm$1.86} & 82.10{\fontsize{4.85}{5.10}\selectfont $\pm$0.37} & 73.93{\fontsize{4.85}{5.10}\selectfont $\pm$1.37} & 79.00{\fontsize{4.85}{5.10}\selectfont $\pm$0.59} & 76.02{\fontsize{4.85}{5.10}\selectfont $\pm$0.78} & 87.06{\fontsize{4.85}{5.10}\selectfont $\pm$0.18} & 69.80{\fontsize{4.85}{5.10}\selectfont $\pm$0.82} & 79.06{\fontsize{4.85}{5.10}\selectfont $\pm$0.27} & 68.74{\fontsize{4.85}{5.10}\selectfont $\pm$0.36} \\
PLANET & \underline{96.62}{\fontsize{4.85}{5.10}\selectfont $\pm$0.22} & \underline{92.44}{\fontsize{4.85}{5.10}\selectfont $\pm$0.43} & \underline{57.06}{\fontsize{4.85}{5.10}\selectfont $\pm$0.61} & \underline{47.49}{\fontsize{4.85}{5.10}\selectfont $\pm$1.23} & \underline{85.16}{\fontsize{4.85}{5.10}\selectfont $\pm$0.88} & \underline{77.23}{\fontsize{4.85}{5.10}\selectfont $\pm$0.86} & \underline{81.22}{\fontsize{4.85}{5.10}\selectfont $\pm$0.50} & \underline{77.55}{\fontsize{4.85}{5.10}\selectfont $\pm$0.75} & \underline{87.37}{\fontsize{4.85}{5.10}\selectfont $\pm$0.12} & \underline{70.74}{\fontsize{4.85}{5.10}\selectfont $\pm$0.78} & \underline{84.16}{\fontsize{4.85}{5.10}\selectfont $\pm$0.07} & \underline{74.43}{\fontsize{4.85}{5.10}\selectfont $\pm$0.18} \\
\midrule
\textbf{ICE} & \textbf{96.97}{\fontsize{4.85}{5.10}\selectfont $\pm$0.16} & \textbf{92.70}{\fontsize{4.85}{5.10}\selectfont $\pm$0.14} & \textbf{59.42}{\fontsize{4.85}{5.10}\selectfont $\pm$0.48} & \textbf{50.36}{\fontsize{4.85}{5.10}\selectfont $\pm$0.82} & \textbf{86.72}{\fontsize{4.85}{5.10}\selectfont $\pm$0.39} & \textbf{79.24}{\fontsize{4.85}{5.10}\selectfont $\pm$0.52} & \textbf{82.64}{\fontsize{4.85}{5.10}\selectfont $\pm$0.33} & \textbf{79.06}{\fontsize{4.85}{5.10}\selectfont $\pm$0.45} & \textbf{88.21}{\fontsize{4.85}{5.10}\selectfont $\pm$0.10} & \textbf{72.08}{\fontsize{4.85}{5.10}\selectfont $\pm$0.54} & \textbf{85.24}{\fontsize{4.85}{5.10}\selectfont $\pm$0.08} & \textbf{75.96}{\fontsize{4.85}{5.10}\selectfont $\pm$0.22} \\
\bottomrule
\end{tabular*}
\endgroup
\end{table}

\begin{table}[t]
\centering
\caption{\textbf{Link prediction and few-shot link classification.} MRR for supervised link
prediction and Accuracy for balanced 2-way classification over eight seeds.}
\label{tab:main-lp}
\begingroup
\fontsize{7.8}{8.6}\selectfont
\setlength{\tabcolsep}{2.0pt}
\renewcommand{\arraystretch}{1.08}
\begin{tabular*}{\linewidth}{@{\extracolsep{\fill}}l*{9}{c}@{}}
\toprule
\textbf{Method} & \multicolumn{3}{c}{\textbf{Link prediction}} & \multicolumn{6}{c}{\textbf{Few-shot link classification}} \\
\cmidrule(lr){2-4}\cmidrule(l){5-10}
 & \textbf{Sports} & \textbf{Cloth} & \textbf{Books-LP} & \textbf{Sports 3} & \textbf{Sports 5} & \textbf{Sports 10} & \textbf{Cloth 3} & \textbf{Cloth 5} & \textbf{Cloth 10} \\
\midrule
\multicolumn{10}{l}{\textit{Traditional multimodal}} \\
MMGCN & 23.44{\fontsize{4.85}{5.10}\selectfont $\pm$0.43} & 17.74{\fontsize{4.85}{5.10}\selectfont $\pm$0.38} & 20.73{\fontsize{4.85}{5.10}\selectfont $\pm$0.48} & 55.86{\fontsize{4.85}{5.10}\selectfont $\pm$2.96} & 53.70{\fontsize{4.85}{5.10}\selectfont $\pm$1.20} & 56.66{\fontsize{4.85}{5.10}\selectfont $\pm$1.60} & 64.36{\fontsize{4.85}{5.10}\selectfont $\pm$4.07} & 68.61{\fontsize{4.85}{5.10}\selectfont $\pm$5.38} & 67.27{\fontsize{4.85}{5.10}\selectfont $\pm$4.20} \\
MGAT & 21.74{\fontsize{4.85}{5.10}\selectfont $\pm$0.96} & 15.47{\fontsize{4.85}{5.10}\selectfont $\pm$0.32} & 21.82{\fontsize{4.85}{5.10}\selectfont $\pm$0.53} & 54.92{\fontsize{4.85}{5.10}\selectfont $\pm$2.40} & 56.55{\fontsize{4.85}{5.10}\selectfont $\pm$2.69} & 57.92{\fontsize{4.85}{5.10}\selectfont $\pm$1.80} & 67.05{\fontsize{4.85}{5.10}\selectfont $\pm$1.45} & 70.34{\fontsize{4.85}{5.10}\selectfont $\pm$3.36} & 68.66{\fontsize{4.85}{5.10}\selectfont $\pm$2.94} \\
\midrule
\multicolumn{10}{l}{\textit{Self-supervised graph models}} \\
GRACE & 25.31{\fontsize{4.85}{5.10}\selectfont $\pm$0.16} & 18.27{\fontsize{4.85}{5.10}\selectfont $\pm$0.15} & 19.30{\fontsize{4.85}{5.10}\selectfont $\pm$0.27} & 56.42{\fontsize{4.85}{5.10}\selectfont $\pm$1.36} & 57.94{\fontsize{4.85}{5.10}\selectfont $\pm$2.58} & 58.50{\fontsize{4.85}{5.10}\selectfont $\pm$1.43} & 62.36{\fontsize{4.85}{5.10}\selectfont $\pm$2.75} & 64.94{\fontsize{4.85}{5.10}\selectfont $\pm$1.09} & 64.96{\fontsize{4.85}{5.10}\selectfont $\pm$1.86} \\
GraphMAE2 & 24.54{\fontsize{4.85}{5.10}\selectfont $\pm$0.30} & 18.69{\fontsize{4.85}{5.10}\selectfont $\pm$0.21} & 19.99{\fontsize{4.85}{5.10}\selectfont $\pm$0.19} & 53.28{\fontsize{4.85}{5.10}\selectfont $\pm$3.09} & 54.36{\fontsize{4.85}{5.10}\selectfont $\pm$1.65} & 56.05{\fontsize{4.85}{5.10}\selectfont $\pm$1.70} & 60.95{\fontsize{4.85}{5.10}\selectfont $\pm$1.61} & 61.39{\fontsize{4.85}{5.10}\selectfont $\pm$1.71} & 62.33{\fontsize{4.85}{5.10}\selectfont $\pm$1.78} \\
\midrule
\multicolumn{10}{l}{\textit{Graph foundation models}} \\
RiemannGFM & 21.92{\fontsize{4.85}{5.10}\selectfont $\pm$0.51} & 19.20{\fontsize{4.85}{5.10}\selectfont $\pm$0.44} & 22.03{\fontsize{4.85}{5.10}\selectfont $\pm$0.67} & 54.07{\fontsize{4.85}{5.10}\selectfont $\pm$1.46} & 53.73{\fontsize{4.85}{5.10}\selectfont $\pm$2.03} & 53.68{\fontsize{4.85}{5.10}\selectfont $\pm$1.24} & 63.45{\fontsize{4.85}{5.10}\selectfont $\pm$4.07} & 62.66{\fontsize{4.85}{5.10}\selectfont $\pm$1.75} & 66.20{\fontsize{4.85}{5.10}\selectfont $\pm$5.94} \\
GFT & 22.04{\fontsize{4.85}{5.10}\selectfont $\pm$0.62} & 17.63{\fontsize{4.85}{5.10}\selectfont $\pm$0.59} & 20.16{\fontsize{4.85}{5.10}\selectfont $\pm$1.21} & 56.75{\fontsize{4.85}{5.10}\selectfont $\pm$2.63} & 56.86{\fontsize{4.85}{5.10}\selectfont $\pm$2.30} & 55.73{\fontsize{4.85}{5.10}\selectfont $\pm$3.36} & 63.70{\fontsize{4.85}{5.10}\selectfont $\pm$2.84} & 66.66{\fontsize{4.85}{5.10}\selectfont $\pm$2.10} & 65.37{\fontsize{4.85}{5.10}\selectfont $\pm$4.03} \\
SAMGPT & 24.09{\fontsize{4.85}{5.10}\selectfont $\pm$0.22} & 16.41{\fontsize{4.85}{5.10}\selectfont $\pm$0.20} & 24.91{\fontsize{4.85}{5.10}\selectfont $\pm$0.38} & 59.41{\fontsize{4.85}{5.10}\selectfont $\pm$3.62} & 59.00{\fontsize{4.85}{5.10}\selectfont $\pm$3.56} & 60.48{\fontsize{4.85}{5.10}\selectfont $\pm$4.25} & 72.61{\fontsize{4.85}{5.10}\selectfont $\pm$1.90} & 74.20{\fontsize{4.85}{5.10}\selectfont $\pm$1.60} & 74.25{\fontsize{4.85}{5.10}\selectfont $\pm$1.75} \\
\midrule
\multicolumn{10}{l}{\textit{Multimodal graph foundations}} \\
UniGraph2 & 27.09{\fontsize{4.85}{5.10}\selectfont $\pm$0.13} & 19.31{\fontsize{4.85}{5.10}\selectfont $\pm$0.35} & 19.44{\fontsize{4.85}{5.10}\selectfont $\pm$0.19} & 60.83{\fontsize{4.85}{5.10}\selectfont $\pm$3.90} & 64.27{\fontsize{4.85}{5.10}\selectfont $\pm$2.74} & 65.08{\fontsize{4.85}{5.10}\selectfont $\pm$3.17} & 73.44{\fontsize{4.85}{5.10}\selectfont $\pm$1.57} & 71.28{\fontsize{4.85}{5.10}\selectfont $\pm$4.20} & 73.77{\fontsize{4.85}{5.10}\selectfont $\pm$1.91} \\
PLANET & \underline{27.51}{\fontsize{4.85}{5.10}\selectfont $\pm$0.14} & \underline{20.25}{\fontsize{4.85}{5.10}\selectfont $\pm$0.27} & \underline{27.62}{\fontsize{4.85}{5.10}\selectfont $\pm$0.25} & \underline{62.89}{\fontsize{4.85}{5.10}\selectfont $\pm$3.99} & \underline{64.36}{\fontsize{4.85}{5.10}\selectfont $\pm$3.05} & \underline{67.84}{\fontsize{4.85}{5.10}\selectfont $\pm$1.38} & \underline{74.03}{\fontsize{4.85}{5.10}\selectfont $\pm$4.12} & \underline{75.22}{\fontsize{4.85}{5.10}\selectfont $\pm$1.20} & \underline{75.44}{\fontsize{4.85}{5.10}\selectfont $\pm$2.84} \\
\midrule
\textbf{ICE} & \textbf{28.38}{\fontsize{4.85}{5.10}\selectfont $\pm$0.11} & \textbf{21.08}{\fontsize{4.85}{5.10}\selectfont $\pm$0.20} & \textbf{29.26}{\fontsize{4.85}{5.10}\selectfont $\pm$0.23} & \textbf{64.11}{\fontsize{4.85}{5.10}\selectfont $\pm$3.71} & \textbf{65.62}{\fontsize{4.85}{5.10}\selectfont $\pm$2.81} & \textbf{69.18}{\fontsize{4.85}{5.10}\selectfont $\pm$1.26} & \textbf{75.36}{\fontsize{4.85}{5.10}\selectfont $\pm$3.84} & \textbf{76.41}{\fontsize{4.85}{5.10}\selectfont $\pm$1.11} & \textbf{76.71}{\fontsize{4.85}{5.10}\selectfont $\pm$2.62} \\
\bottomrule
\end{tabular*}
\endgroup
\end{table}

\begin{table}[t]
\centering
\caption{\textbf{Few-shot node classification.} Focused 5-way comparison across 3, 5, and 10 shots; the complete matrix appears in Appendix Table~\ref{tab:few-shot-nc}.}
\label{tab:main-few-nc}
\begingroup
\fontsize{7.8}{8.6}\selectfont
\setlength{\tabcolsep}{2.0pt}
\renewcommand{\arraystretch}{1.08}
\begin{tabular*}{\linewidth}{@{\extracolsep{\fill}}l*{9}{c}@{}}
\toprule
\textbf{Method} & \multicolumn{3}{c}{\textbf{Grocery 5-way}} & \multicolumn{3}{c}{\textbf{Ele-Fashion 5-way}} & \multicolumn{3}{c}{\textbf{Books-NC 5-way}} \\
\cmidrule(lr){2-4}\cmidrule(lr){5-7}\cmidrule(l){8-10}
 & \textbf{3} & \textbf{5} & \textbf{10} & \textbf{3} & \textbf{5} & \textbf{10} & \textbf{3} & \textbf{5} & \textbf{10} \\
\midrule
GFT & 61.45{\fontsize{4.85}{5.10}\selectfont $\pm$2.22} & 64.60{\fontsize{4.85}{5.10}\selectfont $\pm$3.54} & 63.12{\fontsize{4.85}{5.10}\selectfont $\pm$4.07} & 59.08{\fontsize{4.85}{5.10}\selectfont $\pm$3.95} & 61.38{\fontsize{4.85}{5.10}\selectfont $\pm$3.41} & 62.28{\fontsize{4.85}{5.10}\selectfont $\pm$2.62} & 48.70{\fontsize{4.85}{5.10}\selectfont $\pm$4.96} & 50.95{\fontsize{4.85}{5.10}\selectfont $\pm$4.35} & 51.62{\fontsize{4.85}{5.10}\selectfont $\pm$4.73} \\
UniGraph2 & 60.05{\fontsize{4.85}{5.10}\selectfont $\pm$4.13} & 61.25{\fontsize{4.85}{5.10}\selectfont $\pm$3.09} & 66.27{\fontsize{4.85}{5.10}\selectfont $\pm$3.11} & 53.98{\fontsize{4.85}{5.10}\selectfont $\pm$4.07} & 58.73{\fontsize{4.85}{5.10}\selectfont $\pm$2.95} & 60.38{\fontsize{4.85}{5.10}\selectfont $\pm$2.08} & 59.67{\fontsize{4.85}{5.10}\selectfont $\pm$4.23} & 61.55{\fontsize{4.85}{5.10}\selectfont $\pm$3.97} & 63.80{\fontsize{4.85}{5.10}\selectfont $\pm$4.02} \\
PLANET & \underline{77.85}{\fontsize{4.85}{5.10}\selectfont $\pm$4.06} & \underline{79.88}{\fontsize{4.85}{5.10}\selectfont $\pm$3.27} & \underline{81.93}{\fontsize{4.85}{5.10}\selectfont $\pm$3.48} & \underline{70.50}{\fontsize{4.85}{5.10}\selectfont $\pm$4.37} & \underline{72.97}{\fontsize{4.85}{5.10}\selectfont $\pm$3.55} & \underline{74.85}{\fontsize{4.85}{5.10}\selectfont $\pm$3.73} & \underline{63.62}{\fontsize{4.85}{5.10}\selectfont $\pm$4.11} & \underline{67.59}{\fontsize{4.85}{5.10}\selectfont $\pm$4.68} & \underline{69.18}{\fontsize{4.85}{5.10}\selectfont $\pm$3.90} \\
\midrule
\textbf{ICE} & \textbf{78.96}{\fontsize{4.85}{5.10}\selectfont $\pm$3.78} & \textbf{81.02}{\fontsize{4.85}{5.10}\selectfont $\pm$3.08} & \textbf{83.10}{\fontsize{4.85}{5.10}\selectfont $\pm$3.21} & \textbf{71.61}{\fontsize{4.85}{5.10}\selectfont $\pm$4.01} & \textbf{74.07}{\fontsize{4.85}{5.10}\selectfont $\pm$3.33} & \textbf{76.02}{\fontsize{4.85}{5.10}\selectfont $\pm$3.42} & \textbf{64.83}{\fontsize{4.85}{5.10}\selectfont $\pm$3.86} & \textbf{68.78}{\fontsize{4.85}{5.10}\selectfont $\pm$4.32} & \textbf{70.42}{\fontsize{4.85}{5.10}\selectfont $\pm$3.61} \\
\bottomrule
\end{tabular*}
\endgroup
\end{table}

The supervised comparisons in Tables~\ref{tab:main-nc} and~\ref{tab:main-lp} place ICE first on
both NC metrics and LP MRR. The NC advantage over the next method spans 0.35--2.36 points, while
the three LP margins span 0.83--1.64 points. The narrow RedditS difference matters because it
prevents the aggregate story from relying only on domains with large headroom. At the other end,
Movies and Books-LP show that the same field remains competitive when the target label space or
candidate relations change substantially. These are comparisons of reported means, not paired
significance tests against published baselines.

For five-way NC, we ask whether fresh heads preserve the cross-graph ranking as the label budget
increases across three graphs and the 3-, 5-, and 10-shot settings.
Appendix Table~\ref{tab:few-shot-nc} retains the complete method-by-dataset matrix.

From 3 to 5 and 5 to 10 shots, NC Accuracy increases by 2.06/2.08 points on Grocery, 2.46/1.95 on
Ele-Fashion, and 3.95/1.64 on Books-NC. ICE exceeds PLANET by 1.10--1.24 points across all nine
NC cells. Few-shot link Accuracy increases by 1.51/3.56 points on Sports and 1.05/0.30 on Cloth,
with ICE margins of 1.19--1.34 points across six cells (Table~\ref{tab:main-lp}). These are
eight-run means rather than paired significance tests.

The low-label comparisons test the same frozen foundation under three progressively larger training
budgets, rather than three separately pretrained encoders. The task-specific slopes need not form
one label-scaling curve. Their gains therefore concern fresh-head
adaptation under a fixed representation. The NC and LP trends should not be collapsed into one
mean improvement: class prediction and pair ranking use different output spaces and metrics, and
the small RedditS supervised margin cautions against treating every target graph as equally easy.

\Needspace{15\baselineskip}
\subsection{Ablation and Mechanism Analysis}

% derived LaTeX asset for this paper.
\begin{figure}[t]
\begingroup
\setlength{\abovecaptionskip}{3pt}
\centering
\includegraphics[width=\linewidth]{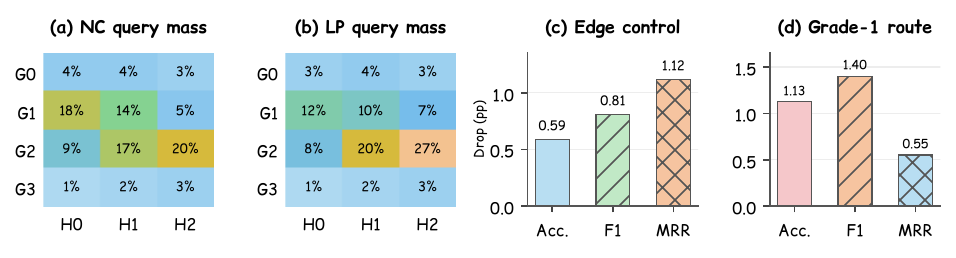}
\caption{\textbf{Mechanism behavior.} (a--b) NC and LP access mass by grade and depth. (c--d) Accuracy (Acc.), Macro-F1 (F1), and MRR drops (percentage points) after edge rolling and protected Grade-1 route removal, respectively.}
\label{fig:mechanism}
\endgroup
\end{figure}
 
% derived LaTeX asset for this paper.
\begin{wraptable}[11]{RT}{0.62\linewidth}
\centering
\caption{\textbf{Core ablation} (eight-seed mean).}
\label{tab:core-ablation}
\setlength{\tabcolsep}{1.2pt}
\renewcommand{\arraystretch}{0.92}
\footnotesize
\begin{tabular}{@{}p{1.34in}ccc@{}}
\toprule
\textbf{Configuration} & \textbf{NC Acc.} & \textbf{NC F1} & \textbf{LP MRR} \\
\midrule
\textbf{ICE} & \textbf{83.21}{\fontsize{6.8}{7.1}\selectfont $\pm$0.31} & \textbf{74.90}{\fontsize{6.8}{7.1}\selectfont $\pm$0.38} & \textbf{26.24}{\fontsize{6.8}{7.1}\selectfont $\pm$0.18} \\
w/o Clifford product & 82.00{\fontsize{6.8}{7.1}\selectfont $\pm$0.46} & 73.54{\fontsize{6.8}{7.1}\selectfont $\pm$0.52} & 24.73{\fontsize{6.8}{7.1}\selectfont $\pm$0.29} \\
w/o layer 2 & \underline{82.71}{\fontsize{6.8}{7.1}\selectfont $\pm$0.37} & \underline{74.34}{\fontsize{6.8}{7.1}\selectfont $\pm$0.44} & 25.58{\fontsize{6.8}{7.1}\selectfont $\pm$0.21} \\
w/o grade-depth bank & 82.52{\fontsize{6.8}{7.1}\selectfont $\pm$0.39} & 74.12{\fontsize{6.8}{7.1}\selectfont $\pm$0.45} & 25.26{\fontsize{6.8}{7.1}\selectfont $\pm$0.24} \\
\midrule
w/o Grade-1 route & 82.08{\fontsize{6.8}{7.1}\selectfont $\pm$0.43} & 73.50{\fontsize{6.8}{7.1}\selectfont $\pm$0.49} & \underline{25.69}{\fontsize{6.8}{7.1}\selectfont $\pm$0.25} \\
w/o task access & 82.29{\fontsize{6.8}{7.1}\selectfont $\pm$0.41} & 73.86{\fontsize{6.8}{7.1}\selectfont $\pm$0.47} & 25.16{\fontsize{6.8}{7.1}\selectfont $\pm$0.26} \\
\bottomrule
\end{tabular}
\end{wraptable}
 
\begingroup
\raggedright
To assess construction, retention, protection, and access, we compare matched removals in
Table~\ref{tab:core-ablation} with the field-use and route controls in
Fig.~\ref{fig:mechanism}. Each removal lowers the corresponding task summaries. The
Clifford product has the largest LP decline
(1.51 points), while removing the protected Grade-1 route gives the largest NC declines (1.13
Accuracy and 1.40 Macro-F1). Layer 2, the retained grade and depth bank, and task access lower MRR
by 0.66, 0.98, and 1.08 points. Together with the grade-depth shares in Fig.~\ref{fig:mechanism},
these matched controls support the coupled construction. Unlike the independent probes in
Section~\ref{sec:empirical}, Fig.~\ref{fig:mechanism} compares NC and LP heads reading
the completed field. The access maps allocate 18 percent of NC
query mass to Grade 1 at H0 and 27 percent of LP query mass to Grade 2 at H2; neither head reads
only a final Grade-1 state. Edge rolling lowers NC Accuracy, Macro-F1, and LP MRR by 0.59, 0.81,
and 1.12 points, respectively. By contrast, removing the Grade-1 route changes LP MRR by 0.55
points, less than its 1.13- and 1.40-point NC declines. This contrast links the protected path
chiefly to node semantics while retaining a measurable pair effect. These diagnostics do not isolate
causal effects or assign fixed meanings to individual grades. Each row removes one component from
the same ICE reference, so the drops cannot be summed to estimate a combined removal.
\par
\endgroup

\section{Conclusion}

Multimodal graph foundations need entity semantics and edge-based relations to serve node and
pair prediction. ICE retains both in one dataset-agnostic Clifford field through edge-aware transport,
a retained grade and depth bank, and a protected Grade-1 route. This keeps relation state accessible
across transport depths without displacing the entity semantics used for node classification, while
leaving pair-specific combinations to a fresh link head. The probes motivate this interface,
while the cross-graph comparisons and matched removals assess transfer and component sensitivity.
The fresh head selects retained state for each prediction unit. This conclusion concerns supervised
and few-shot reuse on the stated graph suite;
it does not establish zero-shot adaptation, transfer to unseen modalities, or robustness to other
prediction units.

\clearpage
\subsection*{AI use statement}
In this work, we used generative AI tools to aid and polish the writing of this manuscript,
including grammar, phrasing, and clarity, as the authors are non-native English speakers. We have
reviewed all AI-assisted work, and the authors verified the final text against the reported results
and claims. We take responsibility for the final content of this work, including text, claims, or
artifacts produced with the aid of generative AI.

\bibliography{references}
\bibliographystyle{iclr2027_conference}

\appendix

\section{Evaluation Protocol}
\label{app:protocol}

\subsection{Shared Evaluation Protocol}

All shared foundations use the same task definitions, node features, graph splits, pretraining
exposure, downstream opportunity, validation rule, evaluator, and seed aggregation. The main
comparison covers only models that reuse one foundation across target graphs. Independently trained
task specialists appear in a separate appendix section because they receive a different training
opportunity.

PLANET results are taken from its published tables. They are not paired runs against ICE, so comparisons report mean differences and seed variation without a paired significance claim.

The foundation is trained for five epochs on the eleven-graph pool with exposure weights
$5,5,5,1,1,1,10,0.5,0.1,20,20$. The same foundation checkpoint initializes every downstream job.
Seeds 42 through 49 change fresh job initialization and data order, not the shared foundation checkpoint.
NC uses ten downstream epochs, root batch 128, and fanout $(5,10)$. LP uses batches of 1,024
positive edges, one full-graph non-edge per positive, node batches of 128 with fanout $(10,10)$,
and validation query batches of 1,024. The predefined evaluator ranks one positive against 150
ordered candidates using its pessimistic tie rule.

\subsection{Few-shot protocol}

Few-shot NC uses 5-way episodes on Grocery, Ele-Fashion, and Books-NC. Few-shot link classification
uses balanced 2-way episodes on Sports and Cloth. Each setting reports 3-shot, 5-shot, and 10-shot
Accuracy over ten sampled tasks. The data and sampling rules match the supervised setting, while a
fresh prototype classifier replaces the supervised MLP. NC tasks use 20 training episodes and 10
query episodes. Link tasks use 20 training episodes and 40 query episodes. The foundation checkpoint
is unchanged and no dataset identity enters the shared encoder.

\subsection{Selection and held-out evaluation}

Every downstream checkpoint is selected by one predeclared validation rule. Test labels, metrics, and candidate scores remain hidden during method, hyperparameter, head,
checkpoint, or seed selection. After the method and selection rule are finalized, the selected state is restored and evaluated
once. The supervised main-table claim requires strict improvement in each of the 12 NC and three LP metrics; ties fail. Nine few-shot NC and six few-shot link comparisons are reported separately, giving the 30 supervised and few-shot comparisons summarized in the Abstract.

\section{Datasets and Metrics}
\label{app:datasets}

\subsection{Foundation Graph Inventory}

The evaluation suite combines four MAGB graphs and seven MM-GRAPH graphs
\citep{MAGB,mmgraph,PLANET}. Movies, Toys, Grocery, Sports, Cloth, and Ele-Fashion are product
graphs. RedditS is a post graph. Books-NC and Books-LP are book graphs. MM-CoDEx-s and MM-CoDEx-m
are knowledge graphs used during foundation learning but not in the active NC/LP evaluation.
Table~\ref{tab:dataset-inventory} reports the graph totals and the exposure ratios used during
foundation pretraining.

\begin{table}[htbp]
\centering
\caption{\textbf{Foundation graph inventory and pretraining exposure.}}
\label{tab:dataset-inventory}
\begingroup
\setlength{\tabcolsep}{3.2pt}
\renewcommand{\arraystretch}{0.96}
\footnotesize
\begin{tabular}{lrrlrr}
\toprule
\textbf{Graph} & \textbf{Nodes} & \textbf{Edges} & \textbf{Role} & \textbf{Classes} & \textbf{Exposure} \\
\midrule
Movies       & 16,672  & 218,390   & NC & 20 & 5.0 \\
Toys         & 20,695  & 126,886   & NC & 18 & 5.0 \\
Grocery      & 17,074  & 171,340   & NC & 20 & 5.0 \\
Sports       & 50,250  & 356,202   & LP & -- & 1.0 \\
Cloth        & 125,839 & 951,271   & LP & -- & 1.0 \\
Ele-Fashion  & 97,766  & 199,602   & NC & 12 & 1.0 \\
RedditS      & 15,894  & 566,160   & NC & 20 & 10.0 \\
Books-NC     & 685,294 & 7,235,048 & NC & 11 & 0.5 \\
Books-LP     & 636,502 & 3,437,017 & LP & -- & 0.1 \\
MM-CoDEx-s   & 1,383   & 15,884    & Pretrain & -- & 20.0 \\
MM-CoDEx-m   & 7,697   & 52,840    & Pretrain & -- & 20.0 \\
\bottomrule
\end{tabular}
\endgroup
\end{table}

MAGB graphs use the standard 60/20/20 train, validation, and test partitions. MM-GRAPH graphs use
their official partitions. Qwen2-VL-7B-Instruct produces one 3,584-dimensional text vector and one
3,584-dimensional image vector per node. The foundation loader receives only the training-visible
graph objects prescribed by the protocol. Sampling may add a root self-loop for message passing,
but self-loops are excluded from the relation probes used in the empirical study.

\subsection{Task Metrics and Aggregation}

NC reports Accuracy and Macro-F1. LP reports MRR from one positive edge and 150 predefined ordered
candidates with pessimistic tie handling. The evaluation unit is one independently initialized
downstream job. For metric value $z_s$ from seed $s$ and $S=8$, the reported entry is
\begin{equation}
\bar z=\frac{1}{S}\sum_{s=1}^{S}z_s,
\qquad
\operatorname{std}(z)=\sqrt{\frac{1}{S-1}\sum_{s=1}^{S}(z_s-\bar z)^2}.
\label{eq:aggregation}
\end{equation}
Seeds correspond to 42 through 49. Every seed must finish before aggregation. A failed or missing
seed remains visible and cannot be replaced by a better run. Main-table entries use
$\bar z\pm\operatorname{std}(z)$ with two decimal places.

\section{Implementation Details}
\label{app:implementation}

Table~\ref{tab:implementation} records the shared foundation configuration and the common
evaluation budget. It separates model-specific choices from the common comparison
opportunity. The task-family head configuration remains identical across datasets and seeds.

\begin{table}[htbp]
\centering
\caption{\textbf{Foundation and downstream configuration.}}
\label{tab:implementation}
\begingroup
\setlength{\tabcolsep}{4pt}
\renewcommand{\arraystretch}{0.98}
\footnotesize
\begin{tabular}{lll}
\toprule
\textbf{Stage} & \textbf{Configuration} & \textbf{Value} \\
\midrule
Foundation & Input width per modality & 3,584 \\
Foundation & Hidden width and blades & 256 and 8 \\
Foundation & Transport layers & 2 \\
Foundation & RWPE steps & 4 \\
Foundation & Root batch and fanout & 128 and $(5,10)$ \\
Foundation & Epochs and optimizer & 5 and AdamW \\
Foundation & Learning rate and weight decay & $4\times10^{-5}$ and $5\times10^{-4}$ \\
Foundation & Linear warmup ratio and gradient clip & 0.4 and 1.0 \\
Foundation & Attention and contrast temperatures & 1.0 and 0.07 \\
Foundation & Text and image mask rates & 0.3 and 0.3 \\
Foundation & Dropout and edge chunk size & 0.1 and 32,768 \\
\midrule
NC & Epochs, root batch, and fanout & 10, 128, and $(5,10)$ \\
LP & Epochs and positive-edge batch & 10 and 1,024 \\
LP & Node fanout and evaluation batch & $(10,10)$ and 1,024 \\
Evaluation & Seeds and checkpoint rule & 42 through 49 and validation only \\
Evaluation & Test calls & Once per selected job \\
\bottomrule
\end{tabular}
\endgroup
\end{table}

Mixed precision and gradient checkpointing are enabled for foundation learning. The shared
checkpoint contains the encoder and reusable adapters but no downstream label space, prototypes,
classifier outputs, dataset embedding, or job-local optimizer state. Each downstream run restores
the same checkpoint and creates a fresh task head. Training, checkpoint selection, and the final
test call are logged separately so that the selected state and the held-out access count can be
audited without relying on terminal output.

\subsection{Model Size and Adaptation Cost}

Table~\ref{tab:resource-budget} reports the shared foundation size and completed NC adaptation
cost. The checkpoint contains the dataset-agnostic encoder and reusable adapters, while each
downstream job creates a compact fresh head.

% derived LaTeX asset for this paper.
\begin{table}[htbp]
\centering
\caption{\textbf{Model and adaptation scale.} Architecture size and completed NC adaptation cost.}
\label{tab:resource-budget}
\footnotesize
\setlength{\tabcolsep}{3.1pt}
\begin{tabular}{p{0.58\linewidth}p{0.30\linewidth}}
\toprule
\textbf{Measure} & \textbf{Value} \\
\midrule
\multicolumn{2}{l}{\textbf{Foundation state}} \\
Foundation parameters & 79.99 M \\
Foundation checkpoint & 305.2 MiB \\
\midrule
\multicolumn{2}{l}{\textbf{Fresh NC adaptation}} \\
Head parameters & 22.7 to 41.3 K \\
Peak GPU memory & 0.52 to 0.95 GiB \\
Six-dataset wall time & 29.0 min \\
\bottomrule
\end{tabular}
\end{table}
 
\section{Additional Method Details}

\subsection{End-to-End Data Flow}

For each sampled root subgraph, width adapters normalize topology, text, and image inputs.
Eq.~\ref{eq:lift} places them at explicit scalar and vector addresses. Two instances of
Eq.~\ref{eq:transport} propagate graph-guided multivector messages. Semantic topology contrast
and masked geometric reconstruction update the shared encoder. The encoder
exports H0/H1/H2, and a fresh task-specific head maps those states to NC logits or LP pair scores.
No label space, head state, dataset embedding, or dataset-indexed hidden state crosses downstream
jobs.

\subsection{Foundation learning and task adaptation}

Algorithm~\ref{alg:ice-training} follows the same object boundary as
Definition~\ref{def:addressable-field}. Foundation learning updates only the shared adapters,
Clifford transport, and self-supervised decoders. Each downstream job restores the same field
encoder and creates a fresh head for its prediction unit.

\begin{algorithm}[H]
\DontPrintSemicolon
\SetAlgoLined
\KwIn{Foundation graphs $\{\mathcal{G}_d\}$, exposure weights $\{q_d\}$, encoder $F_{\theta}$}
\KwOut{Shared encoder $F_{\theta^{\star}}$ and independently fitted task heads}
\ForEach{foundation update}{
  Sample graph $\mathcal{G}_d\sim q$ and a training-root subgraph\;
  Lift topology, text, and image inputs into $\mathbf{H}^{0}$ and mask the target modality\;
  Apply two edge-aware Clifford transport layers and retain $\mathbf{H}^{0:2}$\;
  Compute semantic topology contrast and masked geometric reconstruction\;
  Update $\theta$ and discard every dataset-local state\;
}
Save the reusable encoder $F_{\theta^{\star}}$\;
\ForEach{downstream node or link task}{
  Restore $F_{\theta^{\star}}$ and initialize one fresh task-family head\;
  \eIf{few-shot task}{
    Keep $F_{\theta^{\star}}$ fixed and fit only the fresh prototype head\;
  }{
    Fit the fresh head and protocol-permitted encoder parameters on training data\;
  }
  Select the state with validation data and evaluate the selected state once\;
}
\caption{Foundation learning and task adaptation in ICE.}
\label{alg:ice-training}
\end{algorithm}

The separation in Algorithm~\ref{alg:ice-training} prevents label spaces and job-local parameters
from entering the shared field. It also makes the reusable object explicit. A target task receives
the encoder states defined in Definition~\ref{def:addressable-field}, not the optimizer or head that
created a result on another graph.

\subsection{Pretraining target construction}

For a node whose text or image input is masked, ICE builds a semantic anchor from visible
neighbors. Let $\mathbf{z}_i$ denote the protected Grade-1 state and let $\mathcal{A}_i$ contain
neighbors whose target modality remains visible. The normalized anchor is
\begin{equation}
\mathbf{a}_i=\operatorname{norm}\left(
\sum_{j\in\mathcal{A}_i}\beta_{ij}\mathbf{z}_j\right),
\qquad
\beta_{ij}=\operatorname{softmax}_{j\in\mathcal{A}_i}
\left(\mathbf{z}_i^{\top}\mathbf{z}_j/\tau_{\mathrm{top}}\right).
\label{eq:semantic-anchor-detail}
\end{equation}
Semantic topology contrast places the masked node near this anchor and separates it from anchors
of other sampled roots. The geometric decoder receives the visible H2 field and reconstructs the
masked vector together with the higher grades formed during transport. The two targets play
different roles. The anchor keeps entity semantics tied to graph neighborhoods, while geometric
reconstruction rewards the relation state that develops beyond Grade 1.

The mask is sampled independently for text and image inputs. A node with one hidden modality keeps
the other modality and its topology features, so reconstruction cannot be solved by copying the
target input. Nodes with no eligible visible neighbor are excluded from the contrastive term but
remain available to geometric reconstruction. This rule keeps the objective defined on sparse
neighborhoods without introducing a dataset-specific fallback.

\subsection{Task head construction}

The NC head enforces the semantic bound through its parameterization. For unconstrained parameters
$a_{c\ell g}$ and $b_{c\ell g}$, ICE sets
\begin{equation}
\gamma_{c\ell g}=\rho\,
\operatorname{softmax}_{\ell,g}(a_{c\ell g})\tanh(b_{c\ell g}).
\label{eq:bounded-head-parameterization}
\end{equation}
The absolute coefficient sum is at most $\rho$, satisfying Eq.~\ref{eq:nc-head} and the residual
bound in Eq.~\ref{eq:main-semantic-bound}. The prototype term remains present for every class, while
the residual learns which grades and depths refine that semantic score.

The LP head uses a symmetric pair feature. Let $\mathbf{z}^{\ell g}_i$ be the vectorized grade $g$
at depth $\ell$, and let
$\mathbf{R}^{\ell}_{ij}=\mathbf{H}^{\ell}_i\rev{\mathbf{H}^{\ell}_j}$. ICE forms
\begin{equation}
\mathbf{r}_{ij}=\operatorname*{concat}_{\ell,g}
\left[|\mathbf{z}^{\ell g}_i-\mathbf{z}^{\ell g}_j|,
\mathbf{z}^{\ell g}_i\odot\mathbf{z}^{\ell g}_j,
\grade{\mathbf{R}^{\ell}_{ij}}{0},
\lVert\grade{\mathbf{R}^{\ell}_{ij}}{2}\rVert_2\right].
\label{eq:symmetric-link-feature}
\end{equation}
Absolute differences and elementwise products are unchanged when the nodes are exchanged. The
scalar relation is also unchanged, while the bivector changes orientation but not norm. A compact
MLP applied to $\mathbf{r}_{ij}$ therefore produces the same score for $(i,j)$ and $(j,i)$ while
retaining both semantic agreement and oriented interaction magnitude.

\section{Proofs and Complexity Analysis}
\label{app:properties}

This appendix proves the two theorems and two lemmas used in the main paper. Clifford identities
establish grade reachability, while the retained-state, shared-transport, and head constraints give
the ICE-specific access, equivariance, and semantic-bound results.

\subsection{Proof of cross-grade reachability}

\begin{proof}[Proof of Theorem~\ref{thm:cross-grade}]
For $a=\sum_i a_i e_i$ and $b=\sum_j b_j e_j$, the Clifford relation
$e_i e_j+e_j e_i=2\delta_{ij}$ gives
\begin{equation}
ab=\sum_i a_i b_i+\sum_{i<j}(a_i b_j-a_j b_i)e_i e_j
=a\cdot b+a\wedge b.
\label{eq:appendix-cross-grade}
\end{equation}
The first summand is Grade 0 and is symmetric in $a,b$; the second is Grade 2 and changes sign when
the vectors are exchanged. For $B=a\wedge b$ and a vector $c$, the standard grade decomposition of
the geometric product gives
\begin{equation}
Bc=\grade{Bc}{1}+\grade{Bc}{3}=B\mathbin{\lrcorner}c+B\wedge c.
\label{eq:appendix-bivector-vector}
\end{equation}
Thus a vector--vector edge interaction writes scalar and bivector coefficients in H1, and a
subsequent vector interaction can return the bivector to Grade 1 or extend it to Grade 3 in H2.
Learned linear maps can change coefficients before the next product, but they do not change the
grade closure of $\Cl(3)$. This proves reachability, not a fixed semantic meaning for any learned
blade. In ICE, the result is exactly why the first transport layer constructs relation state and the
second layer composes that state before H2 is exposed to the heads.
\end{proof}

\subsection{Proof of complete grade--depth access}

\begin{proof}[Proof of Lemma~\ref{lem:field-access}]
The Clifford algebra is the direct sum of its grade subspaces. Therefore every retained state has the
unique decomposition
\begin{equation}
\mathbf{H}^{\ell}_i=\sum_{g=0}^{3}\grade{\mathbf{H}^{\ell}_i}{g},
\qquad \ell\in\{0,1,2\}.
\label{eq:appendix-grade-decomposition}
\end{equation}
By Definition~\ref{def:addressable-field}, ICE stores each summand at each of the three depths.
The collection $\mathcal{B}_i$ therefore reconstructs H0, H1, and H2 uniquely by Eq.~\ref{eq:appendix-grade-decomposition}; no grade or depth is discarded at the encoder boundary. A node head may
select a single address or a learned combination, while a link head may form its pair descriptor
from two complete queries. This is the precise sense in which NC and LP read different prediction
units from one dataset-agnostic field. It does not claim that every task uses every address.
\end{proof}

\subsection{Proof of node-permutation equivariance}

\begin{proof}[Proof of Theorem~\ref{thm:equivariance}]
Let $\mathbf{P}$ be the permutation matrix for a node relabeling $\pi$. Shared nodewise maps give
$\mathbf{Q}'=\mathbf{P}\mathbf{Q}$, $\mathbf{K}'=\mathbf{P}\mathbf{K}$, and
$\mathbf{V}'=\mathbf{P}\mathbf{V}$. The edge $j\!\to\!i$ becomes
$\pi(j)\!\to\!\pi(i)$, so the corresponding Clifford interaction is unchanged:
\begin{equation}
\mathbf{E}'_{\pi(i)\pi(j)}
=\mathbf{Q}'_{\pi(i)}\rev{\mathbf{K}'_{\pi(j)}}/\sqrt{h}
=\mathbf{Q}_{i}\rev{\mathbf{K}_{j}}/\sqrt{h}
=\mathbf{E}_{ij}.
\label{eq:appendix-equivariant-edge}
\end{equation}
The softmax inputs and normalized messages are consequently paired one-to-one. Reindexing the
neighborhood sum gives
\begin{equation}
\mathbf{U}'_{\pi(i)}
=\sum_{j\in\mathcal{N}(i)}\alpha_{ij}
\left(\frac{\mathbf{E}_{ij}}{\lVert\mathbf{E}_{ij}\rVert+\epsilon}\right)\mathbf{V}_{j}
=\mathbf{U}_{i}.
\label{eq:appendix-equivariant-transport}
\end{equation}
The input lift, pointwise normalization, residual maps, and feed-forward maps are all shared and
therefore commute with the same permutation. Induction over the two transport layers proves
$\mathbf{H}^{\ell\prime}_{\pi(i)}=\mathbf{H}^{\ell}_{i}$ for $\ell=0,1,2$, which is the stated
equivariance. A pointwise NC head inherits the relabeling, and the symmetric LP descriptor in
Eq.~\ref{eq:lp-head} is unchanged when both nodes are relabeled or exchanged. Shared transport
is consequently a legal graph operator for the cross-graph foundation interface.

\end{proof}

\subsection{Proof of the protected Grade-1 semantic route}

\begin{proof}[Proof of Lemma~\ref{lem:semantic-residual}]
Subtract the Grade-1 prototype term from Eq.~\ref{eq:nc-head} and write the remaining score as
\begin{equation}
r_{ic}=\sum_{\ell,g}\gamma_{c\ell g}\cos_g(\mathbf{H}^{\ell}_i,\mathbf{P}_c).
\label{eq:appendix-semantic-residual}
\end{equation}
Each normalized cosine lies in $[-1,1]$. The triangle inequality and the coefficient constraint
therefore give
\begin{equation}
\left|r_{ic}\right|
\leq\sum_{\ell,g}|\gamma_{c\ell g}|
\left|\cos_g(\mathbf{H}^{\ell}_i,\mathbf{P}_c)\right|
\leq\sum_{\ell,g}|\gamma_{c\ell g}|
\leq\rho.
\label{eq:appendix-semantic-bound}
\end{equation}
Since $r_{ic}=s^{\mathrm{NC}}_{ic}-\cos_1(\mathbf{H}^{\mathrm{pool}}_i,\mathbf{P}_c)$, this is the
claimed bound. The Grade-1 semantic score is always present; the learned residual may refine it with
other grades and depths but cannot absorb it without violating the constraint. This is the exact
architectural protection used by the node head, not an empirical assertion about which coefficient is
largest.
\end{proof}

\subsection{Complexity consequence}

Let a sampled batch contain $|V_b|$ nodes and $|E_b|$ directed edges, with hidden width $h$, $B$
Clifford blades, and $L$ transport layers. One edge-wise product combines at most $B^2$ blade pairs
per hidden channel, while node projections, normalization, residual maps, and feed-forward updates
cost $O(hB)$ per node and layer. Hence
\begin{equation}
T_{\mathrm{ICE}}=O(L|E_b|hB^2)+O(L|V_b|hB),
\label{eq:appendix-complexity}
\end{equation}
and retaining H0 through HL costs $O(L|V_b|hB)$ state memory, plus the selected edge chunk. ICE fixes
$B=8$ and $L=2$, so the architecture is linear in sampled nodes and edges up to constant Clifford
factors. This bound explains the implementation scale without introducing a separate theoretical
claim; detailed resource values and adaptation costs are reported in Additional Method Details.

\section{Additional Experiments}

\subsection{Mechanism Controls}

Main-paper Figure~\ref{fig:mechanism} compares grade--depth access with the edge-rolling
and protected-route controls. The exact metric values and grade--depth shares are retained in
Appendix Tables~\ref{tab:grade-depth-values}--\ref{tab:semantic-anchor-values}, allowing the
visual trends to be checked without repeating the same composite here.

\subsection{Complete Few-Shot Matrix}

The main text reports the focused comparison at the three requested label budgets. The complete
method-by-dataset matrix is retained here for auditability.
% derived LaTeX asset for this paper.
\begin{table}[htbp]
\centering
\caption{\textbf{Few-shot node classification.} Accuracy for 5-way 3-shot, 5-shot, and 10-shot tasks.}
\label{tab:few-shot-nc}
\begingroup
\setlength{\tabcolsep}{0pt}
\renewcommand{\arraystretch}{0.98}
\tiny
\begin{tabular*}{\linewidth}{@{\extracolsep{\fill}}cl*{9}{c}@{}}
\toprule
& \multirow{2}{*}{\textbf{Method}}
& \multicolumn{3}{c}{\textbf{Grocery 5-way}}
& \multicolumn{3}{c}{\textbf{Ele-Fashion 5-way}}
& \multicolumn{3}{c}{\textbf{Books-NC 5-way}} \\
\cmidrule(lr){3-5} \cmidrule(lr){6-8} \cmidrule(l){9-11}
& & \textbf{3} & \textbf{5} & \textbf{10}
& \textbf{3} & \textbf{5} & \textbf{10}
& \textbf{3} & \textbf{5} & \textbf{10} \\
\midrule
 & MMGCN & 48.13{\fontsize{5.4}{5.7}\selectfont $\pm$2.65} & 50.30{\fontsize{5.4}{5.7}\selectfont $\pm$3.13} & 53.73{\fontsize{5.4}{5.7}\selectfont $\pm$3.51} & 54.37{\fontsize{5.4}{5.7}\selectfont $\pm$2.99} & 57.05{\fontsize{5.4}{5.7}\selectfont $\pm$2.42} & 60.87{\fontsize{5.4}{5.7}\selectfont $\pm$3.08} & 52.93{\fontsize{5.4}{5.7}\selectfont $\pm$2.94} & 54.10{\fontsize{5.4}{5.7}\selectfont $\pm$2.90} & 56.42{\fontsize{5.4}{5.7}\selectfont $\pm$2.84} \\
 & MGAT & 50.83{\fontsize{5.4}{5.7}\selectfont $\pm$2.88} & 54.07{\fontsize{5.4}{5.7}\selectfont $\pm$3.18} & 55.53{\fontsize{5.4}{5.7}\selectfont $\pm$2.98} & 60.38{\fontsize{5.4}{5.7}\selectfont $\pm$2.57} & 61.10{\fontsize{5.4}{5.7}\selectfont $\pm$2.20} & 62.82{\fontsize{5.4}{5.7}\selectfont $\pm$2.18} & 53.15{\fontsize{5.4}{5.7}\selectfont $\pm$3.09} & 57.05{\fontsize{5.4}{5.7}\selectfont $\pm$2.35} & 58.22{\fontsize{5.4}{5.7}\selectfont $\pm$3.40} \\
\midrule
 & GRACE & 57.28{\fontsize{5.4}{5.7}\selectfont $\pm$4.91} & 59.22{\fontsize{5.4}{5.7}\selectfont $\pm$3.50} & 62.22{\fontsize{5.4}{5.7}\selectfont $\pm$2.89} & 55.49{\fontsize{5.4}{5.7}\selectfont $\pm$4.10} & 59.90{\fontsize{5.4}{5.7}\selectfont $\pm$2.73} & 64.72{\fontsize{5.4}{5.7}\selectfont $\pm$3.04} & 56.84{\fontsize{5.4}{5.7}\selectfont $\pm$4.65} & 61.53{\fontsize{5.4}{5.7}\selectfont $\pm$3.14} & 59.20{\fontsize{5.4}{5.7}\selectfont $\pm$2.00} \\
 & GraphMAE2 & 51.90{\fontsize{5.4}{5.7}\selectfont $\pm$5.76} & 54.60{\fontsize{5.4}{5.7}\selectfont $\pm$3.83} & 58.00{\fontsize{5.4}{5.7}\selectfont $\pm$4.26} & 57.35{\fontsize{5.4}{5.7}\selectfont $\pm$3.05} & 60.00{\fontsize{5.4}{5.7}\selectfont $\pm$3.53} & 63.65{\fontsize{5.4}{5.7}\selectfont $\pm$3.48} & 43.20{\fontsize{5.4}{5.7}\selectfont $\pm$2.70} & 46.30{\fontsize{5.4}{5.7}\selectfont $\pm$2.95} & 49.83{\fontsize{5.4}{5.7}\selectfont $\pm$2.60} \\
\midrule
\multirow{3}{*}{\tiny \smash{\rotatebox{90}{GFM}}} & RiemannGFM & 63.63{\fontsize{5.4}{5.7}\selectfont $\pm$2.48} & 66.23{\fontsize{5.4}{5.7}\selectfont $\pm$2.73} & 67.14{\fontsize{5.4}{5.7}\selectfont $\pm$2.02} & 59.13{\fontsize{5.4}{5.7}\selectfont $\pm$3.71} & 60.24{\fontsize{5.4}{5.7}\selectfont $\pm$3.55} & 61.73{\fontsize{5.4}{5.7}\selectfont $\pm$2.90} & 54.29{\fontsize{5.4}{5.7}\selectfont $\pm$4.10} & 57.17{\fontsize{5.4}{5.7}\selectfont $\pm$3.83} & 60.54{\fontsize{5.4}{5.7}\selectfont $\pm$3.92} \\
 & GFT & 61.45{\fontsize{5.4}{5.7}\selectfont $\pm$2.22} & 64.60{\fontsize{5.4}{5.7}\selectfont $\pm$3.54} & 63.12{\fontsize{5.4}{5.7}\selectfont $\pm$4.07} & 59.08{\fontsize{5.4}{5.7}\selectfont $\pm$3.95} & 61.38{\fontsize{5.4}{5.7}\selectfont $\pm$3.41} & 62.28{\fontsize{5.4}{5.7}\selectfont $\pm$2.62} & 48.70{\fontsize{5.4}{5.7}\selectfont $\pm$4.96} & 50.95{\fontsize{5.4}{5.7}\selectfont $\pm$4.35} & 51.62{\fontsize{5.4}{5.7}\selectfont $\pm$4.73} \\
 & SAMGPT & 52.73{\fontsize{5.4}{5.7}\selectfont $\pm$12.69} & 62.33{\fontsize{5.4}{5.7}\selectfont $\pm$10.49} & 66.47{\fontsize{5.4}{5.7}\selectfont $\pm$10.67} & 54.20{\fontsize{5.4}{5.7}\selectfont $\pm$10.78} & 61.13{\fontsize{5.4}{5.7}\selectfont $\pm$9.32} & 61.27{\fontsize{5.4}{5.7}\selectfont $\pm$10.82} & 42.53{\fontsize{5.4}{5.7}\selectfont $\pm$6.79} & 47.00{\fontsize{5.4}{5.7}\selectfont $\pm$8.35} & 51.80{\fontsize{5.4}{5.7}\selectfont $\pm$8.65} \\
\midrule
\multirow{2}{*}{\tiny \smash{\rotatebox{90}{MGFM}}} & UniGraph2 & 60.05{\fontsize{5.4}{5.7}\selectfont $\pm$4.13} & 61.25{\fontsize{5.4}{5.7}\selectfont $\pm$3.09} & 66.27{\fontsize{5.4}{5.7}\selectfont $\pm$3.11} & 53.98{\fontsize{5.4}{5.7}\selectfont $\pm$4.07} & 58.73{\fontsize{5.4}{5.7}\selectfont $\pm$2.95} & 60.38{\fontsize{5.4}{5.7}\selectfont $\pm$2.08} & 59.67{\fontsize{5.4}{5.7}\selectfont $\pm$4.23} & 61.55{\fontsize{5.4}{5.7}\selectfont $\pm$3.97} & 63.80{\fontsize{5.4}{5.7}\selectfont $\pm$4.02} \\
 & PLANET & \underline{77.85}{\fontsize{5.4}{5.7}\selectfont $\pm$4.06} & \underline{79.88}{\fontsize{5.4}{5.7}\selectfont $\pm$3.27} & \underline{81.93}{\fontsize{5.4}{5.7}\selectfont $\pm$3.48} & \underline{70.50}{\fontsize{5.4}{5.7}\selectfont $\pm$4.37} & \underline{72.97}{\fontsize{5.4}{5.7}\selectfont $\pm$3.55} & \underline{74.85}{\fontsize{5.4}{5.7}\selectfont $\pm$3.73} & \underline{63.62}{\fontsize{5.4}{5.7}\selectfont $\pm$4.11} & \underline{67.59}{\fontsize{5.4}{5.7}\selectfont $\pm$4.68} & \underline{69.18}{\fontsize{5.4}{5.7}\selectfont $\pm$3.90} \\
\midrule
 & \textbf{ICE} & \textbf{78.96}{\fontsize{5.4}{5.7}\selectfont $\pm$3.78} & \textbf{81.02}{\fontsize{5.4}{5.7}\selectfont $\pm$3.08} & \textbf{83.10}{\fontsize{5.4}{5.7}\selectfont $\pm$3.21} & \textbf{71.61}{\fontsize{5.4}{5.7}\selectfont $\pm$4.01} & \textbf{74.07}{\fontsize{5.4}{5.7}\selectfont $\pm$3.33} & \textbf{76.02}{\fontsize{5.4}{5.7}\selectfont $\pm$3.42} & \textbf{64.83}{\fontsize{5.4}{5.7}\selectfont $\pm$3.86} & \textbf{68.78}{\fontsize{5.4}{5.7}\selectfont $\pm$4.32} & \textbf{70.42}{\fontsize{5.4}{5.7}\selectfont $\pm$3.61} \\
\bottomrule
\end{tabular*}
\endgroup
\end{table}
 
\subsection{Detailed Ablation Results}

Figure~\ref{fig:ablation-profile} expands the compact main-paper ablation table into
metric-specific drops. It shows that removing the Clifford product most strongly affects LP,
whereas removing the protected semantic route has the largest NC effect. The profile is reported as
a diagnostic decomposition and does not assign a unique causal effect to any component.

% derived LaTeX asset for this paper.
\begin{figure}[htbp]
\centering
\includegraphics[width=\linewidth]{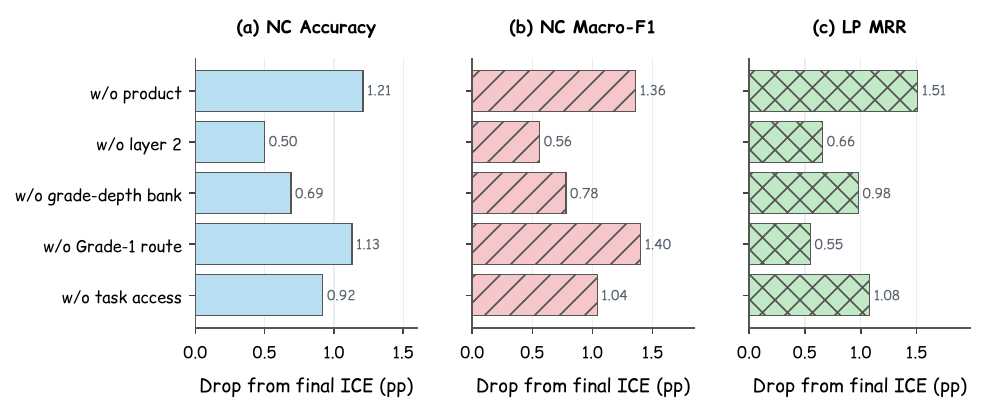}
\caption{\textbf{Component necessity.} Absolute performance changes for the matched removals.}
\label{fig:ablation-profile}
\end{figure}
 
\FloatBarrier
% derived LaTeX asset for this paper.
\subsection{Empirical Probe Results}
\label{app:probe-values}

The main empirical figure emphasizes the trends that determine the method design. The tables below
provide the corresponding values for direct inspection. Table~\ref{tab:relation-probe-values}
records the progression from the pairing control to two-layer relation state, and
Table~\ref{tab:query-probe-values} reports the dataset-level change produced by direct field access.
Table~\ref{tab:cross-grade-capacity-values} quantifies the effect of permanent blade isolation.

\begin{table}[htbp]
\centering
\caption{\textbf{Relation probe.} Pair-relation AUC across the transport sequence.}
\label{tab:relation-probe-values}
\begingroup
\setlength{\tabcolsep}{8pt}
\renewcommand{\arraystretch}{1.02}
\footnotesize
\begin{tabular}{lr}
\toprule
\textbf{Representation} & \textbf{AUC} \\
\midrule
Hidden roll & 0.5496 \\
L1 Vector & 0.7714 \\
L1 V+B & 0.7833 \\
L1+L2 V+B & 0.8090 \\
\bottomrule
\end{tabular}
\endgroup
\end{table}

The progression is monotonic. Vector and bivector coordinates separate observed node pairs from the
pairing control, and the second transport layer strengthens that separation. The pattern supports a
depth-retained field rather than a single final representation.

\begin{table}[htbp]
\centering
\caption{\textbf{Task-access probe.} Improvement over the generic query in percentage points.}
\label{tab:query-probe-values}
\begingroup
\setlength{\tabcolsep}{7pt}
\renewcommand{\arraystretch}{1.02}
\footnotesize
\begin{tabular}{lrr}
\toprule
\textbf{Dataset} & \textbf{Accuracy} & \textbf{Macro-F1} \\
\midrule
Movies & 10.888 & 5.539 \\
Toys & 3.213 & 1.648 \\
Grocery & 4.130 & 3.880 \\
RedditS & 0.472 & 0.302 \\
Ele-Fash. & 2.460 & 5.062 \\
Books-NC & 2.995 & 6.752 \\
\bottomrule
\end{tabular}
\endgroup
\end{table}

All twelve entries are positive, although their magnitude varies by graph and metric. The access
benefit is therefore broad rather than uniform. Movies shows the largest Accuracy change, while
Books-NC shows the largest Macro-F1 change. This heterogeneity is consistent with task heads using
different parts of the same retained bank.

\begin{table}[htbp]
\centering
\caption{\textbf{Cross-grade capacity probe.} Changes induced by permanent blade isolation.}
\label{tab:cross-grade-capacity-values}
\begingroup
\setlength{\tabcolsep}{8pt}
\renewcommand{\arraystretch}{1.02}
\footnotesize
\begin{tabular}{lr}
\toprule
\textbf{Quantity} & \textbf{Value} \\
\midrule
Active parameters under same-blade maps & 26.53\% \\
Layer-2 $e_{12}$ energy retained & 2.658\% \\
Cases improved by blade isolation & 0 of 6 \\

\bottomrule
\end{tabular}
\endgroup
\end{table}

The same-blade restriction removes most effective mappings and sharply weakens the deepest
bivector response. It also fails to improve any matched case. The probe therefore identifies
cross-grade mixing as a practical requirement rather than unused algebraic freedom. ICE retains
dense learned maps after the readable input lift so relation state can move between grades.

\FloatBarrier
 
% derived LaTeX asset for this paper.
\section{Per-Seed Results}

Tables~\ref{tab:seed-accuracy}--\ref{tab:seed-lp} expose the run-level values behind the aggregate
comparison. All entries use the same eight-seed order and the paper-wide mean plus or minus
standard-deviation convention.

\begin{center}
\refstepcounter{table}\label{tab:seed-accuracy}
\textbf{Table \thetable. Node classification Accuracy.} Per-seed scores and aggregate statistics.\par\vspace{3pt}
\scriptsize
\setlength{\tabcolsep}{2.6pt}
\renewcommand{\arraystretch}{1.02}
\begin{tabular}{l*{8}{r}c}
\toprule
\textbf{Dataset} & 42 & 43 & 44 & 45 & 46 & 47 & 48 & 49 & \textbf{Mean $\pm$ std.} \\
\midrule
RedditS & 96.74 & 96.81 & 96.88 & 96.94 & 97.01 & 97.06 & 97.12 & 97.21 & 96.97{\scriptsize $\pm$0.16} \\
Movies & 59.87 & 60.14 & 58.72 & 58.94 & 59.14 & 59.33 & 59.53 & 59.68 & 59.42{\scriptsize $\pm$0.48} \\
Grocery & 86.81 & 86.93 & 87.09 & 87.30 & 86.15 & 86.33 & 86.49 & 86.65 & 86.72{\scriptsize $\pm$0.39} \\
Toys & 82.44 & 82.58 & 82.72 & 82.82 & 82.95 & 83.13 & 82.16 & 82.31 & 82.64{\scriptsize $\pm$0.33} \\
Ele-Fashion & 88.06 & 88.11 & 88.15 & 88.19 & 88.23 & 88.26 & 88.30 & 88.36 & 88.21{\scriptsize $\pm$0.10} \\
Books-NC & 85.32 & 85.36 & 85.12 & 85.16 & 85.19 & 85.23 & 85.26 & 85.28 & 85.24{\scriptsize $\pm$0.08} \\
\bottomrule
\end{tabular}
\end{center}

\begin{center}
\refstepcounter{table}\label{tab:seed-f1}
\textbf{Table \thetable. Node classification Macro-F1.} Per-seed scores and aggregate statistics.\par\vspace{3pt}
\scriptsize
\setlength{\tabcolsep}{2.6pt}
\renewcommand{\arraystretch}{1.02}
\begin{tabular}{l*{8}{r}c}
\toprule
\textbf{Dataset} & 42 & 43 & 44 & 45 & 46 & 47 & 48 & 49 & \textbf{Mean $\pm$ std.} \\
\midrule
RedditS & 92.83 & 92.78 & 92.73 & 92.67 & 92.62 & 92.56 & 92.50 & 92.91 & 92.70{\scriptsize $\pm$0.14} \\
Movies & 50.55 & 50.21 & 49.87 & 49.54 & 49.17 & 51.59 & 51.14 & 50.81 & 50.36{\scriptsize $\pm$0.82} \\
Grocery & 78.93 & 78.72 & 78.48 & 80.02 & 79.73 & 79.53 & 79.36 & 79.14 & 79.24{\scriptsize $\pm$0.52} \\
Toys & 78.41 & 79.73 & 79.49 & 79.31 & 79.17 & 78.98 & 78.79 & 78.61 & 79.06{\scriptsize $\pm$0.45} \\
Ele-Fashion & 72.59 & 72.38 & 72.21 & 71.98 & 71.76 & 71.54 & 71.29 & 72.89 & 72.08{\scriptsize $\pm$0.54} \\
Books-NC & 76.01 & 75.92 & 75.83 & 75.74 & 75.64 & 76.29 & 76.17 & 76.08 & 75.96{\scriptsize $\pm$0.22} \\
\bottomrule
\end{tabular}
\end{center}

\begin{center}
\refstepcounter{table}\label{tab:seed-lp}
\textbf{Table \thetable. Link prediction MRR.} Per-seed scores and aggregate statistics.\par\vspace{3pt}
\scriptsize
\setlength{\tabcolsep}{2.6pt}
\renewcommand{\arraystretch}{1.02}
\begin{tabular}{l*{8}{r}c}
\toprule
\textbf{Dataset} & 42 & 43 & 44 & 45 & 46 & 47 & 48 & 49 & \textbf{Mean $\pm$ std.} \\
\midrule
Sports & 28.41 & 28.44 & 28.48 & 28.54 & 28.22 & 28.27 & 28.31 & 28.36 & 28.38{\scriptsize $\pm$0.11} \\
Cloth & 20.96 & 20.88 & 20.79 & 21.38 & 21.27 & 21.19 & 21.13 & 21.04 & 21.08{\scriptsize $\pm$0.20} \\
Books-LP & 29.12 & 29.22 & 29.31 & 29.39 & 29.48 & 29.60 & 28.93 & 29.03 & 29.26{\scriptsize $\pm$0.23} \\
\bottomrule
\end{tabular}
\end{center}

The run-level view makes the dispersion behind each aggregate explicit. Table~\ref{tab:nc-seed-ranges}
summarizes the NC ranges so that a reader can inspect stability without expanding the main comparison.

\begin{center}
\refstepcounter{table}\label{tab:nc-seed-ranges}
\textbf{Table \thetable. Node classification seed ranges.} Minimum, maximum, and range in percentage points.\par\vspace{3pt}
\scriptsize
\setlength{\tabcolsep}{3.6pt}
\begin{tabular}{lrrrrrr}
\toprule
\multirow{2}{*}{\textbf{Dataset}} & \multicolumn{3}{c}{\textbf{Accuracy}} & \multicolumn{3}{c}{\textbf{Macro-F1}} \\
\cmidrule(lr){2-4}\cmidrule(lr){5-7}
& \textbf{Min} & \textbf{Max} & \textbf{Range} & \textbf{Min} & \textbf{Max} & \textbf{Range} \\
\midrule
RedditS & 96.74 & 97.21 & 0.47 & 92.50 & 92.91 & 0.41 \\
Movies & 58.72 & 60.14 & 1.42 & 49.17 & 51.59 & 2.42 \\
Grocery & 86.15 & 87.30 & 1.15 & 78.48 & 80.02 & 1.53 \\
Toys & 82.16 & 83.13 & 0.97 & 78.41 & 79.73 & 1.33 \\
Ele-Fashion & 88.06 & 88.36 & 0.30 & 71.29 & 72.89 & 1.59 \\
Books-NC & 85.12 & 85.36 & 0.24 & 75.64 & 76.29 & 0.65 \\
\bottomrule
\end{tabular}
\end{center}

\FloatBarrier
\section{Mechanism Measurements}

Tables~\ref{tab:grade-depth-values}--\ref{tab:semantic-anchor-values} give the values behind the
main mechanism figure. They separate grade and depth access from edge dependence and semantic
protection.

\begin{center}
\refstepcounter{table}\label{tab:grade-depth-values}
\textbf{Table \thetable. Grade and depth access.} Query mass in percent.\par\vspace{3pt}
\footnotesize
\begin{tabular}{llrrr}
\toprule
\textbf{Task} & \textbf{Grade} & \textbf{H0} & \textbf{H1} & \textbf{H2} \\
\midrule
NC & G0 & 4.0 & 4.0 & 3.0 \\
NC & G1 & 18.0 & 14.0 & 5.0 \\
NC & G2 & 9.0 & 17.0 & 20.0 \\
NC & G3 & 1.0 & 2.0 & 3.0 \\
\midrule
LP & G0 & 3.0 & 4.0 & 3.0 \\
LP & G1 & 12.0 & 10.0 & 7.0 \\
LP & G2 & 8.0 & 20.0 & 27.0 \\
LP & G3 & 1.0 & 2.0 & 3.0 \\
\bottomrule
\end{tabular}
\end{center}

\begin{center}
\refstepcounter{table}\label{tab:edge-objective-values}
\textbf{Table \thetable. Edge controls.} Values use the original metric units.\par\vspace{3pt}
\scriptsize
\setlength{\tabcolsep}{3.1pt}
\begin{tabular}{lrrr}
\toprule
\textbf{Metric} & \textbf{Reference} & \textbf{Edge roll} & \textbf{Drop in pp} \\
\midrule
NC Acc. & 83.21 & 82.62 & 0.59 \\
NC F1 & 74.90 & 74.09 & 0.81 \\
LP MRR & 26.24 & 25.12 & 1.12 \\
\bottomrule
\end{tabular}
\end{center}

\begin{center}
\refstepcounter{table}\label{tab:semantic-anchor-values}
\textbf{Table \thetable. Grade-1 route control.} Absolute change after route removal.\par\vspace{3pt}
\footnotesize
\begin{tabular}{lr}
\toprule
\textbf{Metric} & \textbf{Drop in pp} \\
\midrule
NC Acc. & 1.13 \\
NC F1 & 1.40 \\
LP MRR & 0.55 \\
\bottomrule
\end{tabular}
\end{center}

Higher grades receive 52 percent of the NC query mass and 61 percent of the LP query mass. Edge
rolling and Grade-1 route removal lower all three summary metrics in their original
percentage-point units. Together, these results connect task access, graph structure, and semantic
protection.

\FloatBarrier
\section{Resource Usage and Result Reconstruction}

This appendix consolidates implementation scale and reconstruction details. Table~\ref{tab:result-fields}
lists the fields that connect every aggregate to a selected checkpoint and run record.

\Needspace{12\baselineskip}
\begin{center}
\refstepcounter{table}\label{tab:result-fields}
\textbf{Table \thetable. Result reconstruction fields.} Stored records for each dataset and seed.\par\vspace{3pt}
\footnotesize
\begin{tabular}{lp{0.69\textwidth}}
\toprule
\textbf{Record} & \textbf{Stored content} \\
\midrule
Protocol & Dataset, task, split, candidate set, metric, and evaluator identity \\
\midrule
Training & Command, configuration, source state, seed, optimizer, and epoch budget \\
\midrule
Selection & Validation metric, selected epoch, selection rule, and checkpoint identity \\
\midrule
Evaluation & Predictions, targets, metric output, and test-access count \\
\midrule
Resources & Parameter count, peak memory, stage time, and total elapsed time \\
\midrule
Aggregation & Eight run values, mean, standard deviation, and comparator target \\
\midrule
Integrity & File size, digest, environment, and reconstruction entry point \\
\bottomrule
\end{tabular}
\end{center}

\FloatBarrier
 
\section{Task-Specific Specialist Comparisons}
\label{app:specialists}

{}Task-specific specialist rows remain outside the shared-foundation ranking. Specialist rows are not compared as if they were one shared foundation, because they may use independent per-dataset encoders, tuning, or supervision.

\section{Reproducibility and Result Fields}
\label{app:reproducibility}

Each result entry contains the dataset, task, metric, seed, selected epoch, validation score, test
score, checkpoint identifier, prediction file, wall time, peak memory, and completion status.
Aggregated results store all eight seed values, their arithmetic mean and standard deviation, and the
corresponding published comparator value.

Each task, dataset, metric, and seed is linked to its selected checkpoint and prediction file before
Eq.~\ref{eq:aggregation} is computed. Every displayed mean and standard deviation is aggregated from
the job-level outputs.

\section{Limitations}

ICE assumes text and image node attributes together with an eight-blade $\Cl(3)$ interface. Other
modalities may require a different input assignment or algebra. The present evaluation covers supervised and few-shot NC and link transfer. It does not study
generation, open-vocabulary reasoning, zero-shot or rotor-based Geometric Prompting, continual
graph arrival, unsupervised label-space transfer, or one head shared across task families.

Addressable grades name computational coordinates but do not assign unique semantics to learned coefficients after dense cross-grade mixing.

\end{document}